\newtheorem{theorem}{Theorem}[section]
\newtheorem{proposition}[theorem]{Proposition}
\newtheorem{remark}[theorem]{Remark}
\newenvironment{proof}[1][Proof]{\begin{trivlist}
\item[\hskip \labelsep {\bfseries #1}]}{\end{trivlist}}
\newcommand{\qed}{\nobreak \ifvmode \relax \else
      \ifdim\lastskip<1.5em \hskip-\lastskip
      \hskip1.5em plus0em minus0.5em \fi \nobreak
      \vrule height0.75em width0.5em depth0.25em\fi}
\newcolumntype{Y}{>{\centering\arraybackslash}X}
\newcommand\norm[1]{\left\lVert#1\right\rVert}
\begin{document}

\title{Incremental Non-Rigid Structure-from-Motion \\ with Unknown Focal Length } 
\titlerunning{Incremental Non-Rigid Structure-from-Motion with Unknown Focal Length}

\authorrunning{Thomas Probst, Danda Pani Paudel, Ajad Chhatkuli,  and Luc Van Gool}

\author{Thomas Probst\Mark{1}, Danda Pani Paudel\Mark{1}, Ajad Chhatkuli\Mark{1}, and Luc Van Gool\Mark{1,2}}


\institute{	\Mark{1}\,Computer Vision Lab, ETH Z\"urich, Switzerland \\
		\Mark{2}\,VISICS, ESAT/PSI, KU Leuven, Belgium \\
	\email{ \{probstt,paudel,ajad.chhatkuli,vangool\}@vision.ee.ethz.ch}}

\maketitle


\begin{abstract}
abstract...
\end{abstract}
%
%


\newpage
\section{Introduction}
\label{sec:intro}
Given images of a rigid object from different views, Structure-from-Motion (SfM)~\cite{Longuet1981,Nister2004,Hartley2004} allows the computation of the object's 3D structure. However, many such objects of interest are non-rigid and the rigidity constraints of SfM do not hold. The ever increasing number of monocular videos with deforming objects means provides a large incentive for being able to reconstruct such scenes. Such reconstruction problems can be solved with Non-Rigid Structure-from-Motion (NRSfM) which uses multiple images of a deforming object to reconstruct its 3D from a single camera. Another related approach computes the shape based on the object's template shape and its deformed image, also termed as Shape-from-Template (SfT). While SfM is well-posed and has already seen several applications in commercial software~\cite{Photoscan,Autodesk}, non-rigid reconstruction has inherent theoretical problems. It is severely under-constrained without prior knowledge of the deformation or the shapes. In fact given a number of images, infinite possibilities of deformations exist that provide the same image projections. Therefore, one of the major challenges in NRSfM is to efficiently combine a realistic deformation constraint and the camera projection model to reduce the solution ambiguity.
\begin{figure}[t]
\small
\centering
  \begin{minipage}{1\textwidth}
  \centering
   \includegraphics[width=0.3\textwidth]{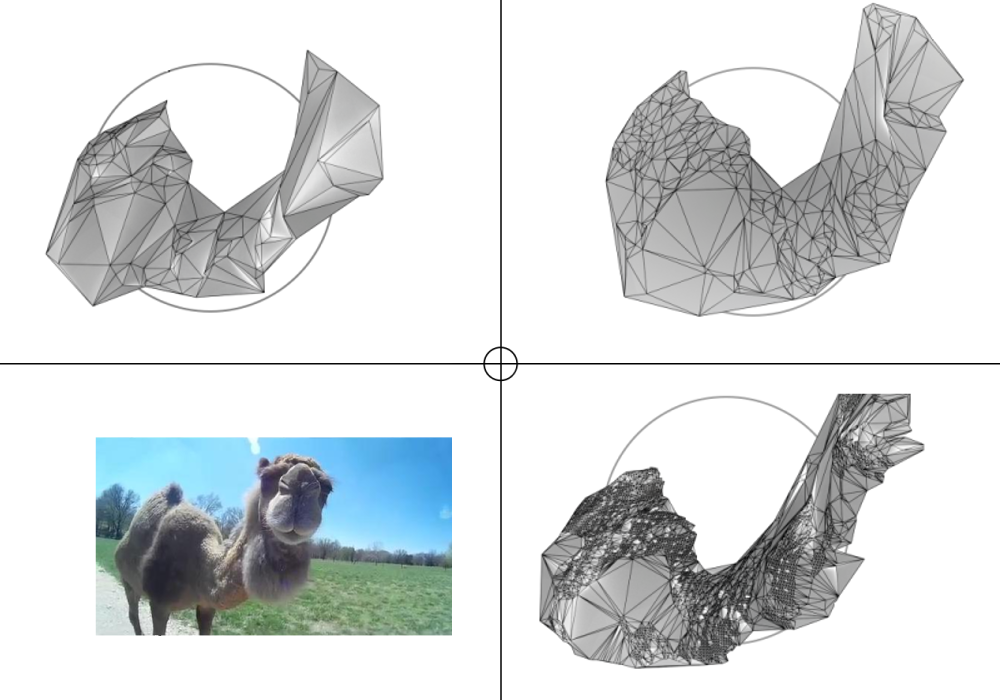}
   \hspace{0.2cm}
  \includegraphics[width=0.3\textwidth]{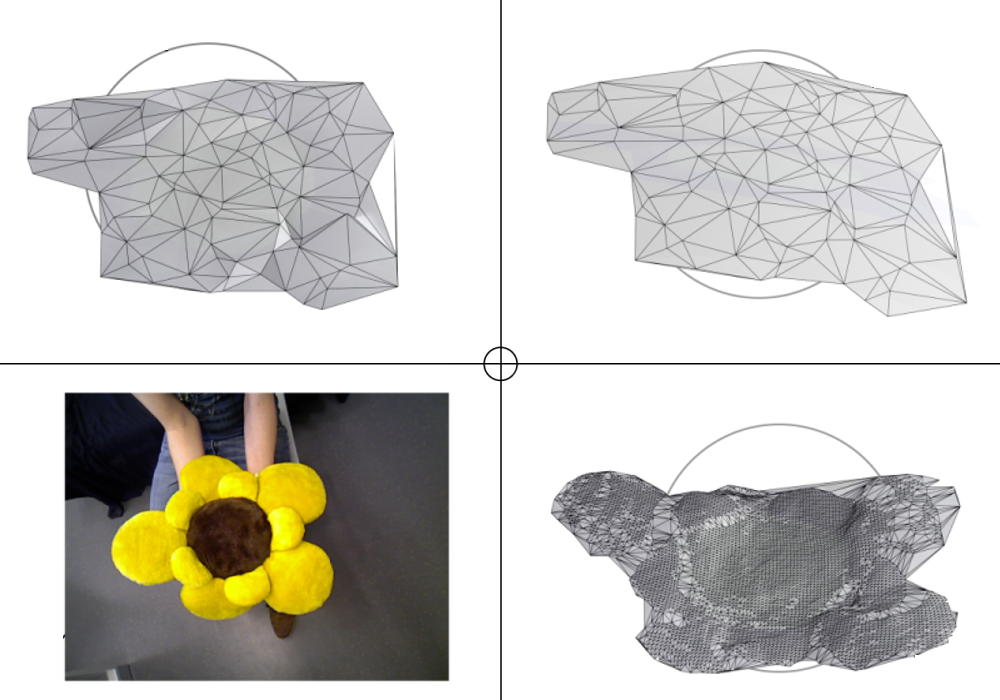}
  \hspace{0.2cm}
  \includegraphics[width=0.3\textwidth]{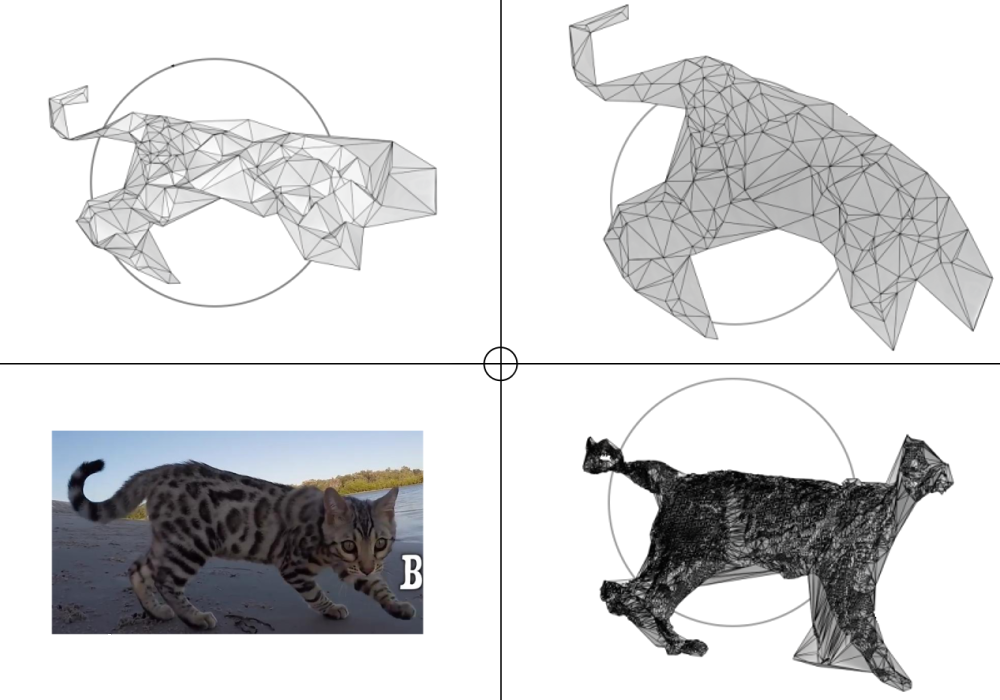}
  \end{minipage}
\caption{\textbf{Qualitative Results.} Comparison of our dense NRSfM method (bottom-right) to Ji~et~al.~\cite{Ji2017} (top-left) and Dai~et~al.~\cite{Dai2012} (top-right) on three different sequences. 
} 
\label{fig:denseQualitativeResultsFrontPage}
\end{figure}

A large majority of previous methods tackle NRSfM with an affine camera model and a low rank approximation of the deforming shapes~\cite{Bregler2000,Torresani2008,DelBue2008,Dai2012,Garg2013,Fayad2010,Agudo2014bmvc,Taylor2010}. However, such methods do not handle perspective effects and nonlinear deformations very well. In this paper we study the use of the uncalibrated perspective camera and the isometric deformation prior for non-rigid reconstruction. Isometry is a geometric prior which implies that the geodesic distances on the surface are preserved with the deformations. This is a good approximation for many real objects such as a human body, paper-like surfaces, or cloth. In SfT, the use of the isometric deformation prior with the perspective camera is considered to be the state-of-the-art~\cite{Bartoli2013iccv,Ngo2016,ChhatkuliPAMI2016} among the parameter-free approaches.  In particular, ~\cite{Bartoli2013,Bartoli2013iccv} also estimate the focal length while recovering the deformation. In NRSfM, some recent methods~\cite{Chhatkuli2016,Ji2017} provide a convex formulation with the inextensible deformation for a calibrated perspective camera setup. The reconstruction is achieved by maximizing depth along the sightlines introduced in ~\cite{Perriollat2011,Salzmann2011} for template-based reconstruction. Although the methods use the perspective camera model and geometric priors for non-rigid reconstruction, their computational complexity does not allow reconstructing a large number of points. On the other hand, some recent dense methods using the perspective camera model have shown promising results, but they rely on piecewise rigidity constraints~\cite{Kumar2017,Russell2014} and shape initialization; this may be too constraining for several applications. Furthermore, methods using the perspective camera either rely on known intrinsics or cannot handle significant nonrigidity~\cite{XiaoKanade2005}. To the best of our knowledge, estimation of the unknown focal length has not been investigated in NRSfM for deforming surfaces.

In this paper we address the aforementioned issues with methods based on the convex relaxation of isometry. More precisely, we provide the following contributions: \emph{a)} a method to `upgrade' the non-rigid reconstruction obtained using incorrect camera intrinsics to the reconstruction of the correct one, \emph{b)} a method to estimate  intrinsics - all five entries in the case of SfT and the unknown focal length in the case template-less NRSfM \emph{c)} an incremental method to add more points to the sparse 3D point-sets for consistent and semi-dense reconstruction \emph{d)} online method of reconstruction by adding images. Besides being of immense practical concern and theoretical value, questions \emph{a)} and \emph{b)} have not been attempted for NRSfM for deforming objects. 
We provide a unified framework to solve the problems \emph{a)} through \emph{d)} using depth maximization and the relaxations of the isometry prior. We provide theoretical justification along with practical methods for intrinsics/focal length estimation as well as densification and online reconstruction strategies. Despite being extremely challenging, we show the applicability of our method with  compelling results. A few examples among them  is shown in Fig~\ref{fig:denseQualitativeResultsFrontPage}. 

\newpage
\section{Problem Modelling}
We pose the NRSfM problem as that of finding point-wise depth in each view. We write the unknown depth as $\lambda_i^l$ and the known homogeneous image coordinates as $\mathsf{u}_i^l$, for the  point $i$ in the $l$-th image. A set of neighboring points of $i$ is denoted by  $\mathcal{N}(i)$. $d_{ij}$ represents the template geodesic distance between point $i$ and $j$, which is an unknown quantity for the NRSfM problem and a known quantity for the SfT problem. We define a nearest neighborhood graph as a set of fixed number of neighbors for each point $i$~\cite{Chhatkuli2016}. To represent the exact isometric NRSfM problem, we also introduce a geodesic distance function between two 3D points on the surface $\mathcal{S}$, $g_\mathcal{S}(x,y): \mathbb{R}^3 \times \mathbb{R}^3 \to \mathbb{R}$. Given the camera intrinsics $\mathsf{K}$, the isometric NRSfM problem can be written as:
\begin{align}
\label{eq:iso-nrsfm}
\begin{split}
& \text{Find}\quad \mathsf{K},\ \lambda_i^l \\
& \text{s.t.}\quad  g\left(\mathsf{K}^{-1}\lambda_{i}^l\mathsf{u}_{i}^l,\ \mathsf{K}^{-1}\lambda_{j}^l\mathsf{u}_{j}^l\right)=  d_{ij}, \ \forall i, \forall j.
\end{split}
\end{align}
~\eqref{eq:iso-nrsfm} defines a non-convex problem and is also not tractable in its given form. It has been shown that with various relaxations~\cite{Vicente2012,Chhatkuli2016,Ji2017}, problem~\eqref{eq:iso-nrsfm} can be solved for a known $\mathsf{K}$ when different views and deformations are observed. In order to tackle the NRSfM problem with an unknown focal length we start with the observation that not all such solutions provide isometrically consistent shapes through all the views. We formulate our methods in the following sections.

\section{Uncalibrated NRSfM}
Given a known object template and a calibrated camera the NRSfM problem in ~\eqref{eq:iso-nrsfm} can be formulated as a convex problem by relaxing the isometry constraint with an inextensibility constraint~\cite{Salzmann2011} as below:
\begin{equation}
\begin{aligned}
& \underset{\lambda_{i}^l}{\textnormal{max}}
& & \sum_{l} \sum_{i} \lambda_{i}^l, \\
& \textnormal{~s.t.}
& & \norm{\mathsf{K}^{-1}(\lambda_{i}^l\mathsf{u}_{i}^l - \lambda_{j}^l\mathsf{u}_{j}^l)}\leq  d_{ij}, & \forall j\in \mathcal{N}(i).
\end{aligned}
\label{eq:basicNRSfM}
\end{equation}
We are, however, interested on solving the same problem when both $d_{ij}$ and $\mathsf{K}$ are unknown. Unfortunately, this problem is not only non-convex, but also unbounded. Therefore, we use two extra constraints on the variables $\mathsf{K}$ and $d_{ij}$ such that the problem of~\eqref{eq:basicNRSfM}, for unknown $d_{ij}$ and $\mathsf{K}$, becomes bounded.
\begin{equation}
\begin{aligned}
\sum_{i}\sum_{j\in\mathcal{N}(i)}d_{ij}=1,&& \mathsf{K}\leq\overline{\mathsf{K}}.
\end{aligned}
\label{eq:dAndKConst}
\end{equation}
Despite being bounded with the addition of~\eqref{eq:dAndKConst}, the reconstruction problem is still non-convex. More importantly, the maximization of the objective function favors the solution when $\mathsf{K}$ is as close as possible to $\overline{\mathsf{K}}$. Therefore, we instead solve the reconstruction problem in ~\eqref{eq:basicNRSfM} with a fixed initial guess $\mathsf{\hat{K}}$ and seek for the upgrade of both intrinsics and reconstruction later. Note that fixing the intrinsics makes the problem convex and identical to that in \cite{Chhatkuli2016}. 
\begin{equation}
\begin{aligned}
& \underset{\lambda_{i}^l, d_{ij}}{\textnormal{max}}
& & \sum_{l} \sum_{i} \lambda_{i}^l, \\
& \textnormal{~s.t.}
& & \norm{\mathsf{\hat{K}}^{-1}(\lambda_{i}^l\mathsf{u}_{i}^l - \lambda_{j}^l\mathsf{u}_{j}^l)}\leq  d_{ij},& j\in \mathcal{N}(i),\\
&&&\sum_{i}\sum_{j\in\mathcal{N}(i)}d_{ij}=1.
\end{aligned}
\label{eq:withBoundNRSfM}
\end{equation}
Now, we are interested to upgrade the solution of~\eqref{eq:withBoundNRSfM} such that the upgraded reconstruction correctly describes the deformed object in the 3D-space. In this work, the upgrade is carried out using a pointwise upgrade equation. In the following, we first derive this upgrade equation assuming that the correct focal length is known and then provide the theory and practical approaches to recover the unknown focal length.

\subsection{Upgrade Equation}
Let us consider, $\lambda_i^l$ and  $\hat{\lambda}_i^l$ are depths, of the point represented by $\mathsf{u}_i^l$, obtained from~\eqref{eq:basicNRSfM} and~\eqref{eq:withBoundNRSfM}, respectively. The following proposition is the key ingredient of our work that relates  $\hat{\lambda}_i^l$ to $\lambda_i^l$ for the reconstruction upgrade.

\begin{proposition}
For $\mathsf{u}_i^l\approx \mathsf{u}_{\mathcal{N}(i)}^l$, $\hat{\lambda}_i^l$ can be upgraded to $\lambda_i^l$ with the known $\mathsf{K}$ using,
\begin{equation}
\lambda_i^l \approx \frac{\hat{\lambda}_i^l\norm{\mathsf{\hat{K}^{-1}}\mathsf{u}_i^l}}{\norm{\mathsf{\mathsf{K}^{-1}}\mathsf{u}_i^l}}.
\label{eq:upgradeEq}
\end{equation}
\label{prop:upgradeLambda}
\end{proposition}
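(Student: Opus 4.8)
The plan is to exhibit a change of variables under which the calibrated problem~\eqref{eq:basicNRSfM} and the surrogate problem~\eqref{eq:withBoundNRSfM} become the \emph{same} problem whenever $\mathsf{u}_i^l\approx\mathsf{u}_{\mathcal{N}(i)}^l$, and then to read off~\eqref{eq:upgradeEq} by matching their optimizers. First I would observe that $\mathsf{K}^{-1}\lambda_i^l\mathsf{u}_i^l$ is the reconstructed $3$D point, so $\gamma_i^l:=\lambda_i^l\norm{\mathsf{K}^{-1}\mathsf{u}_i^l}$ is its Euclidean distance to the camera centre (its ``radial depth''), measured along the unit sightline $\mathsf{K}^{-1}\mathsf{u}_i^l/\norm{\mathsf{K}^{-1}\mathsf{u}_i^l}$; likewise set $\hat{\gamma}_i^l:=\hat{\lambda}_i^l\norm{\mathsf{\hat{K}}^{-1}\mathsf{u}_i^l}$. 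In these variables the claimed identity~\eqref{eq:upgradeEq} is exactly $\gamma_i^l\approx\hat{\gamma}_i^l$, so it suffices to show that the two problems, rewritten in $\gamma$ resp.\ $\hat{\gamma}$, select the same solution.

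The heart of the argument is the inextensibility constraint. I would use the exact decomposition $\mathsf{K}^{-1}(\lambda_i^l\mathsf{u}_i^l-\lambda_j^l\mathsf{u}_j^l)=(\lambda_i^l-\lambda_j^l)\,\mathsf{K}^{-1}\mathsf{u}_i^l+\lambda_j^l\,\mathsf{K}^{-1}(\mathsf{u}_i^l-\mathsf{u}_j^l)$ together with the fact that, for $j\in\mathcal{N}(i)$ with $\mathsf{u}_i^l\approx\mathsf{u}_j^l$, the sightlines $\mathsf{K}^{-1}\mathsf{u}_i^l$ and $\mathsf{K}^{-1}\mathsf{u}_j^l$ are nearly collinear: then $\norm{\mathsf{K}^{-1}(\lambda_i^l\mathsf{u}_i^l-\lambda_j^l\mathsf{u}_j^l)}\approx|\gamma_i^l-\gamma_j^l|$, i.e.\ to leading order the $3$D distance of two neighbours is just the difference of their radial depths, and the same approximation with the same right-hand side $d_{ij}$ holds for~\eqref{eq:withBoundNRSfM} in the $\hat{\gamma}$'s. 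Hence both constraints reduce to $|\nu_i^l-\nu_j^l|\le d_{ij}$, which involves neither $\mathsf{K}$ nor $\mathsf{\hat{K}}$. To finish I would invoke the Maximum Depth Heuristic: each objective ($\sum_{l,i}\gamma_i^l/\norm{\mathsf{K}^{-1}\mathsf{u}_i^l}$, respectively its $\hat{\gamma}$ analogue) is strictly increasing in every coordinate, so each problem picks the maximal feasible configuration compatible with the neighbourhood graph and the \emph{same} set $\{d_{ij}\}$ (those returned by~\eqref{eq:withBoundNRSfM}, already normalised by $\sum_{i,j}d_{ij}=1$). Since this maximal configuration depends only on the graph and the $d_{ij}$, we get $\gamma_i^l\approx\hat{\gamma}_i^l$, which rearranges to~\eqref{eq:upgradeEq}.

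I expect the constraint reduction to be the sticking point: the discarded term $\lambda_j^l\,\mathsf{K}^{-1}(\mathsf{u}_i^l-\mathsf{u}_j^l)$ — the displacement \emph{across} neighbouring sightlines — is, for a generic smooth surface, of the same order as the retained radial term, so the reduction is clean only in the dense-sampling limit and is really being used in the mild sense that, for the upgrade, we only need the radial depths to transfer correctly: the tangential part is reinstated automatically by re-projecting along the correct ray $\mathsf{K}^{-1}\mathsf{u}_i^l$ in~\eqref{eq:upgradeEq}. A fully rigorous version would still have to bound ``$\approx$'' in terms of $\max_{j\in\mathcal{N}(i)}\norm{\mathsf{u}_i^l-\mathsf{u}_j^l}$ and the local surface curvature, and add a short connectivity argument ensuring that the maximal feasible point of the reduced problem is well defined and common to both problems.
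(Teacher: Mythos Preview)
Your approach shares its core mechanism with the paper's proof: both exploit $\mathsf{u}_i^l\approx\mathsf{u}_j^l$ to collapse the inextensibility constraint to a statement involving only $|\lambda_i^l-\lambda_j^l|\,\norm{\mathsf{K}^{-1}\mathsf{u}_i^l}$, which is precisely your $|\gamma_i^l-\gamma_j^l|$. The paper, however, takes a much shorter route. Rather than introducing radial depths and arguing that two optimisation problems coincide, it simply substitutes the proposed relation~\eqref{eq:upgradeEq} into $\norm{\mathsf{\hat K}^{-1}(\hat\lambda_i^l\mathsf{u}_i^l-\hat\lambda_j^l\mathsf{u}_j^l)}^2$ and verifies, in two lines, that this equals $(\lambda_i^l-\lambda_j^l)^2\norm{\mathsf{K}^{-1}\mathsf{u}_i^l}^2\approx\norm{\mathsf{K}^{-1}(\lambda_i^l\mathsf{u}_i^l-\lambda_j^l\mathsf{u}_j^l)}^2$ under the $\mathsf{u}_i^l\approx\mathsf{u}_j^l$ approximation --- i.e.\ it checks constraint preservation directly and declares that sufficient.

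Your route, via the change of variables to $\gamma$ and the observation that~\eqref{eq:upgradeEq} is literally $\gamma_i^l=\hat\gamma_i^l$, is more geometric and makes explicit the step the paper glosses over: why constraint preservation should imply that the \emph{optimisers} match. The difficulty you flag is real --- the reduced feasible set $\{|\nu_i-\nu_j|\le d_{ij}\}$ is translation-invariant, so no coordinate-wise maximal point exists and the ``both objectives are increasing, hence pick the same point'' argument cannot stand on the reduced problem alone; the boundedness of~\eqref{eq:basicNRSfM} and~\eqref{eq:withBoundNRSfM} comes precisely from the cross-sightline term you discard. The paper does not address this either; it simply asserts sufficiency. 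So your proposal is a faithful, slightly expanded rendering of the paper's argument, with its heuristic nature identified rather than hidden. What it buys you over the paper's version is a clean geometric picture (radial depth transfers, tangential part reinstated by reprojection); what the paper's version buys is brevity.
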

\begin{proof}
It is sufficient to show that every~$j\in\mathcal{N}(i)$ satisfies
$\norm{\mathsf{\hat{K}}^{-1}(\hat{\lambda}_{i}^l\mathsf{u}_{i}^l - \hat{\lambda}_{j}^l\mathsf{u}_{j}^l)} \approx \norm{\mathsf{K}^{-1}(\lambda_{i}^l\mathsf{u}_{i}^l - \lambda_{j}^l\mathsf{u}_{j}^l)}$. From~\eqref{eq:upgradeEq}, for any $u_i^l\approx u_{\mathcal{N}(i)}^l$, $\norm{\mathsf{\hat{K}}^{-1}(\hat{\lambda}_{i}^l\mathsf{u}_{i}^l - \hat{\lambda}_{j}^l\mathsf{u}_{j}^l)}^2$ can be expressed as,
\begin{equation}
\begin{aligned}
\approx&\norm{\mathsf{\mathsf{K}^{-1}}\mathsf{u}_i^l}^2 \norm{\mathsf{\hat{K}}^{-1}(\lambda_{i}^l - \lambda_{j}^l)\mathsf{u}_{i}^l}^2\mathbin{/}\norm{\mathsf{\hat{K}^{-1}}\mathsf{u}_i^l}^2,\\
=&(\lambda_{i}^l - \lambda_{j}^l)^2\norm{\mathsf{\mathsf{K}^{-1}}\mathsf{u}_i^l}^2 \approx \norm{\mathsf{K}^{-1}(\lambda_{i}^l\mathsf{u}_{i}^l - \lambda_{j}^l\mathsf{u}_{j}^l)}^2. \hfill\qed
\end{aligned}
\end{equation}
\end{proof}
Note that the condition $\mathsf{u}_i^l\approx\mathsf{u}_{\mathcal{N}(i)}^l$ is valid for any two sufficiently close neighbors. Such neighbors can be chosen using only the image measurements. More importantly, the assumption $\mathsf{u}_i^l\approx\mathsf{u}_{\mathcal{N}(i)}^l$ still allows depths $\lambda_i^l$ and $\lambda_{\mathcal{N}(i)}^l$ to be different. This plays a vital role especially when the close neighboring points differ distinctly in depth, either due to camera perspective or high frequency structural changes. Although,~\eqref{eq:upgradeEq} is only a close approximation for the reconstruction upgrade, its upgrade quality in practice was observed to be accurate.  
The following remark concerns Proposition~\ref{prop:upgradeLambda}.
\begin{remark}
As the guess on intrinsics $\mathsf{\hat{K}}$ tends to the real intrinsics $\mathsf{K}$, the  upgrade equation~\eqref{eq:upgradeEq} holds true for exact equality even when $\mathsf{u}_i^l\not\approx \mathsf{u}_{\mathcal{N}(i)}^l$. In other words,
\begin{equation}
\lim_{\mathsf{\hat{K}}\to\mathsf{K}} \lambda_i^l = \hat{\lambda}_i^l.
\end{equation}
\end{remark}

\subsection{Upgrade Strategies}
The upgrade equation presented in Proposition~\ref{prop:upgradeLambda} assumes that the exact intrinsics $\mathsf{K}$ is known. However, for uncalibrated NRSfM, $\mathsf{K}$ is unknown. While the principal point can be assumed to be at the center of the image for most cameras~\cite{nister2004untwisting}, nothing can be said about the focal length. We henceforth, present strategies to estimate $\mathsf{K}$ in two different scenarios of known and unknown shape template. We rely on the fact that isometric deformation, to a large extent, preserves local rigidity. This is reflected somewhat in the reconstruction obtained from \eqref{eq:withBoundNRSfM}. However, due to changes in the perspective and the extension of points along incorrect sightlines, the use of incorrect intrinsics produces reconstructions that are very less likely to remain isometric across different views. Similarly, an upgrade towards the correct intrinsics in that case produces reconstructions which satisfy the isometry better. This is also supported by the results in Section \ref{sec:results}.
There are various ways one can use isometry of the reconstructed surfaces to determine the correct intrinsics. A very simple method would be to use the fact that given reconstructed points that are dense enough, the correct intrinsics must preserve the local euclidean distance. For $\hat{a}_{i}=\hat{\lambda}_i\norm{\mathsf{\hat{K}}^{-1}\mathsf{u}_i}$, the euclidean distance between two upgraded neighboring 3D points, in any view as a function of intrinsics, can be expressed as,

\begin{equation}
\hat{d}_{ij}(\mathsf{K})=
\norm{
\frac{\hat{a}_i\mathsf{K}^{-1} \mathsf{u}_{i} }{\norm{\mathsf{\mathsf{K}^{-1}}\mathsf{u}_i}}
- 
\frac{\hat{a}_j\mathsf{K}^{-1} \mathsf{u}_{j} }{\norm{\mathsf{\mathsf{K}^{-1}}\mathsf{u}_j}}
}.
\label{eq:upgradedDist}
\end{equation}
Now, we present techniques to estimate $\mathsf{K}$ when the shape template is known (SfT), followed by a method to estimate the focal length for template-less case of NRSfM.

\subsubsection{Template-based Calibration}
For the sake of simplicity, we present the calibration theory using only one image. This is also the sufficient condition for reconstruction when the shape template is known~\cite{Salzmann2011}. Recall that for SfT,
$d_{ij}$ in~\eqref{eq:withBoundNRSfM} are already known during the reconstruction process. For known template distance $d_{ij}$ and the estimated euclidean distance after reconstruction upgrade $\hat{d}_{ij}(\mathsf{K})$, the intrinsics $\mathsf{K}$ can be estimated by minimizing,
\begin{equation}
\label{eq:Phi_T}
\Phi_{\scriptscriptstyle T}(\mathsf{K}) = \sum_i\sum_{j\in\mathcal{N}(i)}\Big(d_{ij}-\hat{d}_{ij}(\mathsf{K})\Big)^2.
\end{equation}
Alternatively, one can also derive polynomial equations on the entries of the so-called Image of the Absolute Conic (IAC), defined as $\mathsf{\Omega}=\mathsf{K^{-\intercal}}\mathsf{K^{-1}}$.  
\begin{proposition}
As long as the rigidity between any pair $\{\mathsf{u}_i, \mathsf{u}_j\}$ is maintained,  either for any $\mathsf{\hat{K}}$ and $\mathsf{u}_i\approx\mathsf{u}_j$ or for  any pair $\{\mathsf{u}_i, \mathsf{u}_j\}$ as $\mathsf{\hat{K}}\rightarrow \mathsf{K}$, the IAC can be approximated by solving,
\begin{equation}
\mathsf{u}_i^\intercal\mathsf{\Omega}\mathsf{u}_i\mathsf{u}_j^\intercal\mathsf{\Omega}\mathsf{u}_j=\gamma_{ij}\big(\mathsf{u}_i^\intercal\mathsf{\Omega}\mathsf{u}_j\big)^2,
\label{eq:templateDIAC}
\end{equation}
for sufficiently many pairs, where,
\begin{equation}
\gamma_{ij} = \Big(\frac{2\hat{a}_{i}\hat{a}_j}{\hat{a}_i^2+\hat{a}_j^2-d_{ij}^2}\Big)^2.
\label{eq:gammaDefTemplate}
\end{equation}
\end{proposition}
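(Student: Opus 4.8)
The plan is to recognize the claimed identity as the rigidity condition $\hat d_{ij}(\mathsf{K})=d_{ij}$ of \eqref{eq:upgradedDist} rewritten in terms of the IAC $\mathsf{\Omega}=\mathsf{K}^{-\intercal}\mathsf{K}^{-1}$. Write $\mathsf{P}_i:=\hat a_i\,\mathsf{K}^{-1}\mathsf{u}_i/\norm{\mathsf{K}^{-1}\mathsf{u}_i}$ for the upgraded 3D point attached to $\mathsf{u}_i$, with $\hat a_i=\hat\lambda_i\norm{\mathsf{\hat K}^{-1}\mathsf{u}_i}$. Then the only two algebraic facts needed are $\norm{\mathsf{P}_i}=\hat a_i$ by construction, and, since $\mathsf{K}^{-\intercal}\mathsf{K}^{-1}=\mathsf{\Omega}$,
\[
\mathsf{P}_i^\intercal\mathsf{P}_j=\frac{\hat a_i\hat a_j\,\mathsf{u}_i^\intercal\mathsf{\Omega}\mathsf{u}_j}{\sqrt{\mathsf{u}_i^\intercal\mathsf{\Omega}\mathsf{u}_i}\,\sqrt{\mathsf{u}_j^\intercal\mathsf{\Omega}\mathsf{u}_j}}.
\]

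Next I would establish that, under either of the two stated regimes, a pair $\{\mathsf{P}_i,\mathsf{P}_j\}$ that maintains rigidity satisfies $\norm{\mathsf{P}_i-\mathsf{P}_j}=d_{ij}$. The reconstruction of \eqref{eq:withBoundNRSfM} obeys the inextensibility cone constraints, and maximizing depth drives a subset of them to equality; for any such tight (rigid) pair $\norm{\mathsf{K}^{-1}(\lambda_i\mathsf{u}_i-\lambda_j\mathsf{u}_j)}=d_{ij}$. By Proposition~\ref{prop:upgradeLambda} one has $\lambda_i\approx\hat a_i/\norm{\mathsf{K}^{-1}\mathsf{u}_i}$, so $\mathsf{P}_i=\lambda_i\mathsf{K}^{-1}\mathsf{u}_i$ and the left-hand side equals $\norm{\mathsf{P}_i-\mathsf{P}_j}$; the residual error of Proposition~\ref{prop:upgradeLambda} is negligible precisely because either $\mathsf{u}_i\approx\mathsf{u}_j$ (valid for any $\mathsf{\hat K}$) or $\mathsf{\hat K}\to\mathsf{K}$ (the Remark), which are the two hypotheses of the statement. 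For close neighbours the geodesic and Euclidean distances coincide, so the tight constraint reads $\norm{\mathsf{P}_i-\mathsf{P}_j}=d_{ij}$.

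The remainder is a routine manipulation. Expanding $d_{ij}^2=\norm{\mathsf{P}_i-\mathsf{P}_j}^2=\hat a_i^2+\hat a_j^2-2\,\mathsf{P}_i^\intercal\mathsf{P}_j$ and substituting the expression for $\mathsf{P}_i^\intercal\mathsf{P}_j$ gives
\[
\frac{\mathsf{u}_i^\intercal\mathsf{\Omega}\mathsf{u}_j}{\sqrt{\mathsf{u}_i^\intercal\mathsf{\Omega}\mathsf{u}_i}\,\sqrt{\mathsf{u}_j^\intercal\mathsf{\Omega}\mathsf{u}_j}}=\frac{\hat a_i^2+\hat a_j^2-d_{ij}^2}{2\hat a_i\hat a_j}.
\]
Squaring both sides to clear the square roots and cross-multiplying yields exactly $\mathsf{u}_i^\intercal\mathsf{\Omega}\mathsf{u}_i\,\mathsf{u}_j^\intercal\mathsf{\Omega}\mathsf{u}_j=\gamma_{ij}(\mathsf{u}_i^\intercal\mathsf{\Omega}\mathsf{u}_j)^2$ with $\gamma_{ij}$ as in \eqref{eq:gammaDefTemplate}. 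Each rigid pair then contributes one homogeneous quadratic equation in the entries of $\mathsf{\Omega}$ (six in general, and as few as one when only the focal length is unknown), so a sufficiently large collection of such pairs over-determines $\mathsf{\Omega}$; in practice one solves a linear system on the lifted monomials, projects the result onto the cone of valid symmetric positive-definite conics, and recovers $\mathsf{K}$ by Cholesky factorisation.

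The main obstacle is the rigidity/approximation bookkeeping of the second step: one must show that the Proposition~\ref{prop:upgradeLambda} error does not accumulate over the pairs retained in the system, and identify which pairs genuinely "maintain rigidity" (the tight depth-maximization constraints, detectable from the computed $\hat\lambda$). A secondary subtlety is the sign lost in squaring — it requires $\hat a_i^2+\hat a_j^2-d_{ij}^2>0$ and $\mathsf{u}_i^\intercal\mathsf{\Omega}\mathsf{u}_j>0$, which hold for cheiral, sufficiently close neighbours — together with the fact that the raw polynomial system does not by itself enforce positive-definiteness, so the projection step cannot be omitted.
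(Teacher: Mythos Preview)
Your derivation is correct and is precisely the computation the paper sets up: you start from the rigidity condition $\hat d_{ij}(\mathsf{K})=d_{ij}$ of~\eqref{eq:upgradedDist}, invoke Proposition~\ref{prop:upgradeLambda} and its Remark to justify the two approximation regimes, and then expand the squared distance using $\mathsf{\Omega}=\mathsf{K}^{-\intercal}\mathsf{K}^{-1}$ to obtain~\eqref{eq:templateDIAC} with $\gamma_{ij}$ as in~\eqref{eq:gammaDefTemplate}. The paper defers its own proof to the supplementary material, but given that all the ingredients you use (the upgraded distance~\eqref{eq:upgradedDist}, the upgrade equation, the IAC definition) are exactly those introduced in the main text immediately before the proposition, your argument is almost certainly the intended one; your additional remarks on sign, positive-definiteness, and which pairs are ``rigid'' are sensible caveats that the paper handles at the algorithmic level (hypothesis generation plus Cholesky) rather than in the proof itself.
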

We provide the proof in the supplementary material.

Note that~\eqref{eq:templateDIAC} is a degree 2 polynomial on the entries of $\mathsf{\Omega}$. Since, $\mathsf{\Omega}$ has 5 degrees of freedom, it can be estimated from 5 pairs of image points, using numerical methods. 
 
The core idea of our template-based calibration consists of three steps: (i) a  fixed number of hypothesis generation, (ii) hypothesis validation using the upgraded reconstruction  quality, (iii) refinement of the best hypothesis.

\noindent\textbf{Hypothesis generation:} Given the template-based uncalibrated reconstruction from~\eqref{eq:withBoundNRSfM}, we generate a set of hypotheses for camera intrinsics from randomly selected sets of minimal closest-point pairs. For every minimal set, we solve~\eqref{eq:templateDIAC} for $\mathsf{\Omega}$ to obtain these hypotheses. Then, the camera intrinsics $\mathsf{K}$ is recovered by performing the Cholesky-decomposition on $\mathsf{\Omega}$.

\noindent\textbf{Hypothesis validation:} Each hypothesis is validated by computing its 3D reconstruction error. To do so, we  first upgrade the initial reconstruction using the upgrade~\eqref{eq:upgradeEq} for current hypothesis. Then, the reconstruction error is computed using~\eqref{eq:Phi_T}. The hypothesis that results into minimum reconstruction error is chosen for further refinement.   

\noindent\textbf{Intrinsics refinement:} Starting from the best hypothesis, we refine the intrinsics by minimizing the following objective function:

\begin{equation}\label{eq:templateCalibCost}
\mathcal{E}(\mathsf{K})=\Phi_T(\mathsf{K})+ k_{(1,3)}^2 + k_{(2,3)}^2 + \Big(1- \frac{k_{(1,1)}}{k_{(2,2)}}\Big)^2,
\end{equation}
where, $k_{(i,j)}$ is the $i^{th}$-row and $j^{th}$-column entry of the normalized intrinsic matrix $\mathsf{K}$. Note that, we regularize the 3D reconstruction 
error $\Phi_T(\mathsf{K})$ by the expected structure of $\mathsf{K}$ (i.e. principal point close to the center and unit aspect ratio). Our regularization term is often the main objective for existing autocalibration methods~ \cite{nister2004untwisting,chandraker2007autocalibration}.
The minimization of objective $\mathcal{E}(\mathsf{K})$ can be carried out efficiently using locally optimal iterative refinement methods.

Now, we summarize our calibration method in Algo.~\ref{alg:templateCalibration}.
\begin{algorithm}[h]
{\fontsize{8pt}{8pt}\selectfont
\caption{\small [$\mathsf{K}$] = calibrateWithTemplate($\mathsf{\hat{K}}$)}
\label{alg:templateCalibration}
\begin{algorithmic}
 \STATE 1. Reconstruct 3D using~\eqref{eq:withBoundNRSfM} for known $d_{ij}$  and the guess $\mathsf{\hat{K}}$.
 \STATE 2.  Select multiple sets of minimal closest-point pairs  
 $\{\mathsf{u}_i, \mathsf{u}_j\}$.
 \STATE 3. For each set,  \\
 \hspace{1mm}(i) Generate hypothesis $\mathsf{\tilde{K}}$ by solving ~\eqref{eq:templateDIAC}.\\ 
 \hspace{1mm}(ii) Upgrade the reconstruction for $\mathsf{\tilde{K}}$  using~\eqref{eq:upgradeEq}.\\
 \hspace{1mm}(iii) Compute the reconstruction error for $\mathsf{\tilde{K}}$  using~\eqref{eq:Phi_T}.  
  \STATE 4.  Among all sets, choose $\mathsf{\tilde{K}}$ with best reconstruction error.  
 \STATE 5.  Refine the best hypothesis $\mathsf{\tilde{K}}$  using~\eqref{eq:templateCalibCost} to obtain  $\mathsf{K}$.
\end{algorithmic}
}
\end{algorithm}

\subsubsection{Template-less Calibration}
As the self-calibration with the unknown template is extremely challenging, we relax it by considering that the principal point is at the center of the image and that the two focal lengths are equal. We 
assume that the intrinsics are constant across views. We then measure the consistency of the upgraded local euclidean distances, defined by~\eqref{eq:upgradedDist}, across different views.
More precisely, we wish to estimate the focal length in $\mathsf{K}$ by minimizing the following objective function, 
\begin{equation}
\label{eq:Phi}
\Phi(\mathsf{K}) = \sum_{k}\sum_{l\neq k}\sum_i\sum_{j\in\mathcal{N}(i)}\Big(\hat{d}_{ij}^k(\mathsf{K})-\hat{d}_{ij}^l(\mathsf{K})\Big)^2.
\end{equation}
Ideally, it is also possible to derive polynomials on $\mathsf{\Omega}$, analogous to~\eqref{eq:templateDIAC}. This can be done by eliminating the unknown variable $d_{ij}$ from two equations for two views of the same pair. Unfortunately, the equation derived in this manner does not turn out to be easily tractable. Alternatively, one can also attempt to solve the polynomials without eliminating  variables $d_{ij}$ -- on both variables $\mathsf{\Omega}$ and $d_{ij}$.  
However for practical reasons\footnote{ For most of the cameras, it is safe to assume that their intrinsics have no skew, unit aspect ratio, and a principal point close to the image center.}, 
we design a method assuming only one entry of  $\mathsf{\Omega}$, corresponding to the focal length, is unknown. Under such assumption, we show in the supplementary materials that a polynomial of degree 4, one variable, equivalent to~\eqref{eq:templateDIAC}, can also be derived. 

In this paper, we avoid making hypothesis on the focal length, since it is not really necessary. 
Unlike the case of template-based calibration, we address the problem of template-less calibration iteratively in two steps: (i) focal length refinement, (ii) focal length validation. Henceforth for the template-less calibration, we make a slight abuse of notation by using $\mathsf{K}$ even for the intrinsics with only unknown focal length, unless mentioned otherwise.


\noindent\textbf{Focal length refinement:} Given an initial guess on focal length, its refinement is carried out by minimizing the objective function $\Phi(\mathsf{K})$ of~\eqref{eq:Phi} (optionally, on the full intrinsics). This refinement process finds a refined  $\mathsf{K}$ which results a better isometric consistency of the reconstructions across views.

\noindent\textbf{Focal length validation:} The main problem of template-less calibration is to obtain the 
validity for the given pair of intrinsics and the reconstruction. In other words, if one is given all reconstructions from all possible focal lengths, it is not trivial to know the correct reconstruction. Especially when reconstructing using overestimated intrinsics with MDH, $\mathsf{K}$ allows the average depths to dominate the objective, while preserving the isometry. This usually leads to a flat and small scaled reconstruction~\cite{ChhatkuliPAMI2016}.
Therefore an overestimated guess $\mathsf{\hat{K}}$ favors its own reconstruction over any upgraded one, while minimizing $\Phi(\mathsf{K})$. 
Relying on this observation, \emph{we seek for the isometrically consistent reconstruction with the smallest focal length}, which works very well in practice. An algebraic analysis of our reasoning is provided in the supplementary material. 

While searching for focal length, we use a sweeping procedure. On the one hand, if a reconstruction with the given focal length does not favor any upgrade, the sweeping is performed towards the lower focal length with a predefined step size, unless it starts favoring the upgrade. On the other hand, if the reconstruction favors the upgrade, we follow the suggested focal length update, until it suggests no more upgrade. The sought focal length is the one below which the upgrade is favorable, whereas above which it is not. Let $\delta(\mathsf{K}_1,\mathsf{K}_2)$  be gap in focal lengths of two intrinsics  
$\mathsf{K}_1$ and $\mathsf{K}_2$, $\Delta\mathsf{K}$ be a small step size which when added to an intrinsic matrix $\mathsf{K}$ increases its focal length by that step size. Our template-less calibration method is summarized in Algo.~\ref{alg:calibration}. 

\begin{algorithm}[h]
{\fontsize{8pt}{8pt}\selectfont
\caption{\small [$\mathsf{K}$] = calibrateWithoutTemplate($\mathsf{\hat{K}}$)}
\label{alg:calibration}
\begin{algorithmic}
\STATE 0. Set sweep direction $flag = 0$.
 \STATE 1. Reconstruct 3D using~\eqref{eq:withBoundNRSfM} for the guess $\mathsf{\hat{K}}$.
 \STATE 2.  Starting from $\mathsf{\hat{K}}$, minimize $\Phi(\mathsf{K})$ in~\eqref{eq:Phi} to obtain  $\mathsf{K}^*$.
 \STATE 3. IF $\delta(\mathsf{K}^*,\mathsf{\hat{K}})\leq\epsilon$,\\
           \hspace{10mm} IF $flag == 0$, set $\mathsf{\hat{K}}=\mathsf{K}^*-\Delta \mathsf{K}$ and goto step 1.\\                          \hspace{10mm}ELSE, return $\mathsf{K}^*$. \\
 \hspace{3mm}ELSE, set and $flag = 1$, $\mathsf{\hat{K}}=\mathsf{K}^*$ and goto step 1.
\end{algorithmic}
}
\end{algorithm}

We show in the experiment section, that the Algo.~\ref{alg:calibration} converges in very few iterations.
In every iteration, beside the reconstruction itself, the major computation is only required while minimizing  $\Phi(\mathsf{K})$. Recall that, $\Phi(\mathsf{K})$ is minimized iteratively using a local method. During local search, the reconstruction for every update is required to compute $\Phi(\mathsf{K})$. Thanks to the upgrade equation, the cost
$\Phi(\mathsf{K})$ can be computed instantly, without going through the computationally expensive reconstruction process. 

\subsection{Intrinsics Recovery in Practice}
Although our reconstruction method makes inextensible shape assumption, the upgrade strategies use the piece-wise rigidity constraint. Despite the fact that the piece-wise rigid assumption is mostly true for inextensible shapes, it could be problematic in certain cases, for example, when the reconstructed points are too sparse. Therefore, some special care need to taken for a robust calibration.    

\noindent{\bf Distance normalization and geodesics:} 
Recall that the upgrade equation~\eqref{eq:upgradeEq} is an approximation under the assumption that either the neighboring image points are sufficiently close to each other or a good guess $\hat{\mathsf{K}}$ is provided. When neither of these conditions are satisfied, the intrinsics obtained from energy minimization may not be sufficiently accurate. While a larger focal length may reduce the residual error, it also reduces individual distances creating  disparities in the reconstruction scale of different views. Therefore, during each iteration of refinement, we fix the scale by enforcing, 
\begin{equation}
\sum_{i}\sum_{j\in\mathcal{N}(i)}\hat{d}_{ij}^l(\mathsf{K}) =1, \forall l. 
\end{equation}
Another important practical aspect here is the use of geodesics $\hat{g}^l(i,j)$ instead of $\hat{d}^l_{ij}$ in Eq.~\eqref{eq:Phi} or Eq.~\eqref{eq:Phi_T}. When the scene points are sparse, using geodesics instead of the local euclidean distances may be necessary. We therefore choose to use geodesics computed from Dijkstra's algorithm~\cite{Dijkstra1959} instead of the local euclidean distances for stability.

\noindent{\bf Re-reconstruction and re-calibration:}  For a better calibration accuracy, especially when the initial guess $\mathsf{\hat{K}}$ is largely inaccurate, we iteratively perform re-reconstruction and re-calibration, starting from newly estimated intrinsics, until convergence. This has already been included in Algo.~\ref{alg:calibration}, which we also included on top of Algo.~\ref{alg:templateCalibration} in our implementation. In practice, only a few such iterations are sufficient to converge, even when the initial guess on intrinsics is very arbitrary. 

\section{Incremental Semi-dense NRSfM}
The SOCP problem of~\eqref{eq:withBoundNRSfM} has the time complexity of $O(n^3)$.
Therefore in practice, only  a sparse set of points can be reconstructed in this manner.
Here, we present a method to iteratively densify the initial sparse reconstruction, followed by online new view/camera addition strategy.
Besides many obvious importance of incremental reconstruction, it is also necessary in our context:  
(a) to allow the selection of the closest image point pairs for camera calibration, (b) to compute 3D Geodesic distances for single view reconstruction.

\subsection{Adding New Points}
Let  $\mathcal{P}$ represents a set of sparse points reconstructed using~\eqref{eq:withBoundNRSfM}.  
We  would like to reconstruct a set of new points  $\mathcal{Q}$ with depths $\zeta^l_i$, such that $\mathcal{Q}\cap\mathcal{P}=\emptyset$, consistent to the
existing reconstruction. This can be achieved by solving the following convex optimization problem,    
{\small
\begin{equation}
\begin{aligned}
& \underset{\zeta_{i}^l, e_{ij}, \alpha}{\textnormal{max}}
& & \alpha\Lambda+\sum_{l} \sum_{i\in\mathcal{Q}} \zeta_{i}^l, \\
& \textnormal{~s.t.}
& & \norm{\mathsf{\hat{K}}^{-1}(\zeta_{i}^l\mathsf{u}_{i}^l - \alpha\lambda_{j}^l\mathsf{u}_{j}^l)}\leq  e_{ij},& j\in \mathcal{N}_p(i),\\
& & & \norm{\mathsf{\hat{K}}^{-1}(\zeta_{i}^l\mathsf{u}_{i}^l - \zeta_{j}^l\mathsf{u}_{j}^l)}\leq  e_{ij},& j\in \mathcal{N}_q(i),\\
&&&\sum_{i}\sum_{j\in\mathcal{N}_q(i)}e_{ij}=1-\alpha,
\end{aligned}
\label{eq:addingPoints}
\end{equation}
}
where, $\Lambda=\sum_{l} \sum_{i\in\mathcal{P}} \lambda_{i}^l$, $\mathcal{N}_p(i)=\mathcal{N}(i)\cap\mathcal{P}$, and ${\mathcal{N}_q(i)=\mathcal{N}(i)\cap\mathcal{Q}}$. The scalars $\alpha$ and $1-\alpha$ represent the contributions of initial reconstruction $\mathcal{P}$ and new reconstruction $\mathcal{Q}$, respectively. Note that the newly reconstructed points respect the inextensible criteria not only among themselves but also with respect to the initial reconstruction. This maintains the consistency between reconstructions $\mathcal{P}$ and $\mathcal{Q}$. The incremental dense reconstruction process iteratively adds disjoint sets $\mathcal{Q}_1,\mathcal{Q}_2,\ldots\mathcal{Q}_r$ to the initial reconstruction $\mathcal{P}$, where $\mathcal{P}$ encodes the overall shape and $\mathcal{Q}_r$ represents the details.

\subsection{Adding New Cameras} 
Adding a new camera to the NRSfM reconstruction is fundamentally a template-based reconstruction problem. If the camera is calibrated, one can obtain the reconstruction directly from~\eqref{eq:basicNRSfM}. For the uncalibrated case, the camera can be calibrated first using~\eqref{eq:templateDIAC}, and the reconstruction upgraded  from~\eqref{eq:withBoundNRSfM} using~\eqref{eq:upgradeEq}. It is important to note that the computation of accurate template geodesic distances $d_{ij}$, as required for template-based reconstruction, is possible only when the reconstruction is dense enough. This is not really a problem, thanks to the proposed incremental reconstruction method.

\section{Discussion}
\noindent{\bf Initial guess $\mathsf{\hat{K}}$:} In all our experiments, we choose the initial guess $\mathsf{\hat{K}}$ by setting both focal lengths to the half of the mean image size and principal point to the image center.

\noindent{\bf Missing features:} Feature points may be missing from some images due to occlusion or matching failure. This problem can be addressed during reconstruction by discarding all the variables corresponding to missing points together with all the inextensible constraints involving them as done in \cite{Chhatkuli2016,Vicente2012}.

\noindent{\bf Reconstruction Consistency:}
Alternative to~\eqref{eq:addingPoints}, one can also think of reconstructing two overlapping sets $\mathcal{P}$ and $\mathcal{Q}$ such that 
$\mathcal{P}\cap\mathcal{Q}=\mathcal{R}$ independently. Then, the registration between them can be done with the help of $\mathcal{R}$ from two sides. However, this is not only computationally inefficient due to the overlap, but also geometrically inconsistent.

\begin{figure*}[tb]
\centering
\small
  \begin{minipage}{1\textwidth}
   \includegraphics[width=0.24\textwidth]{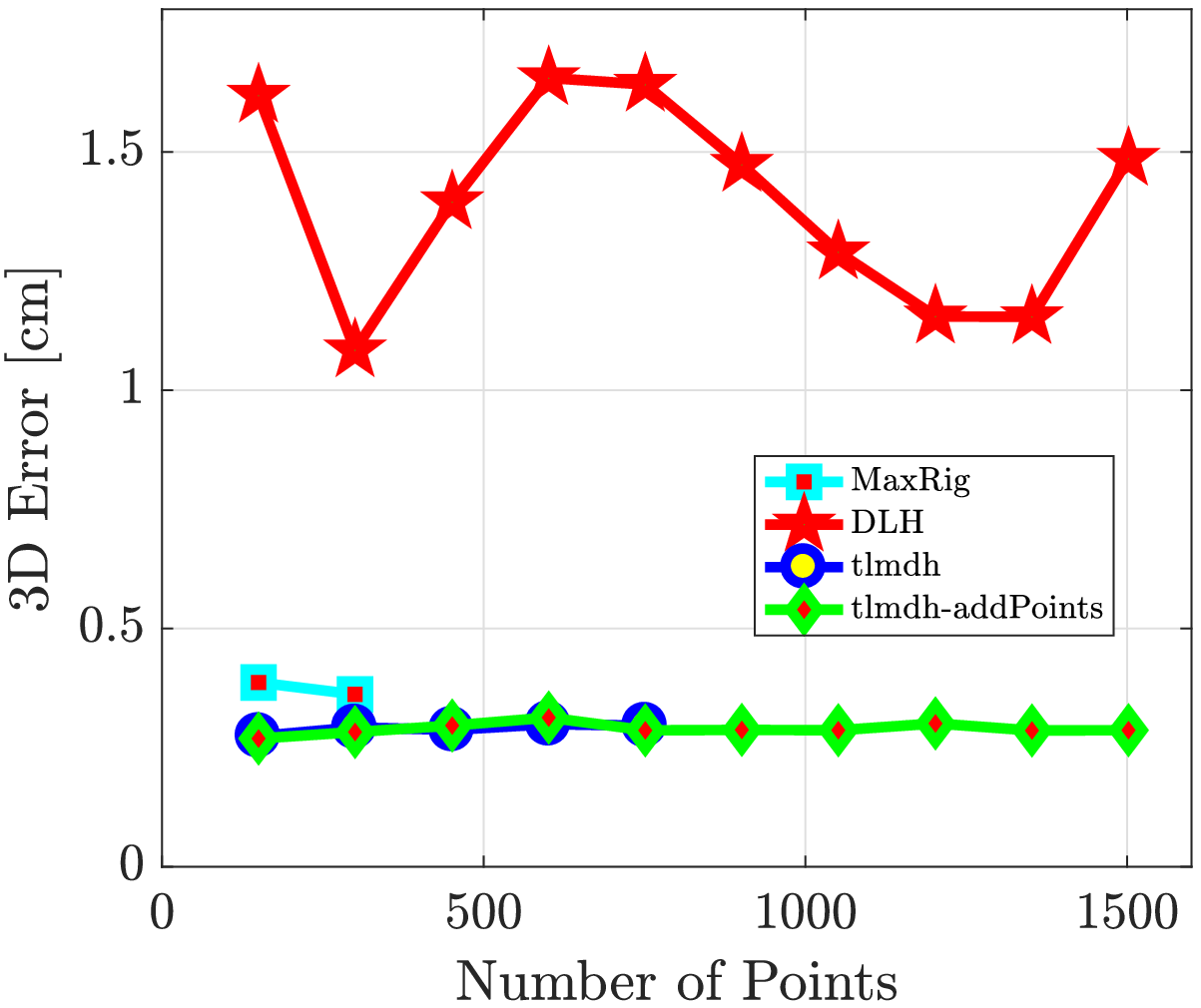}
   \includegraphics[width=0.24\textwidth]{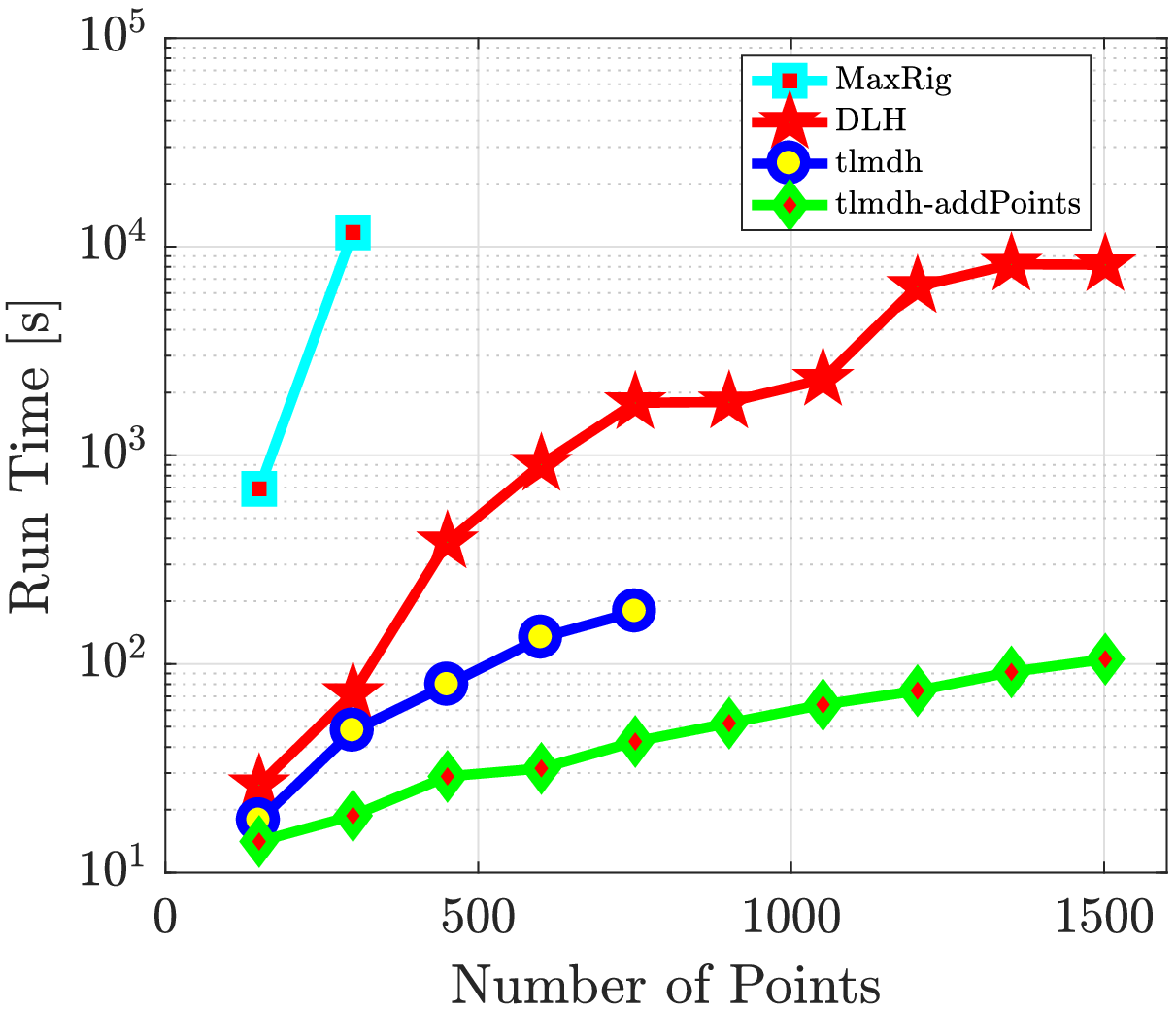}
   \includegraphics[width=0.24\textwidth]{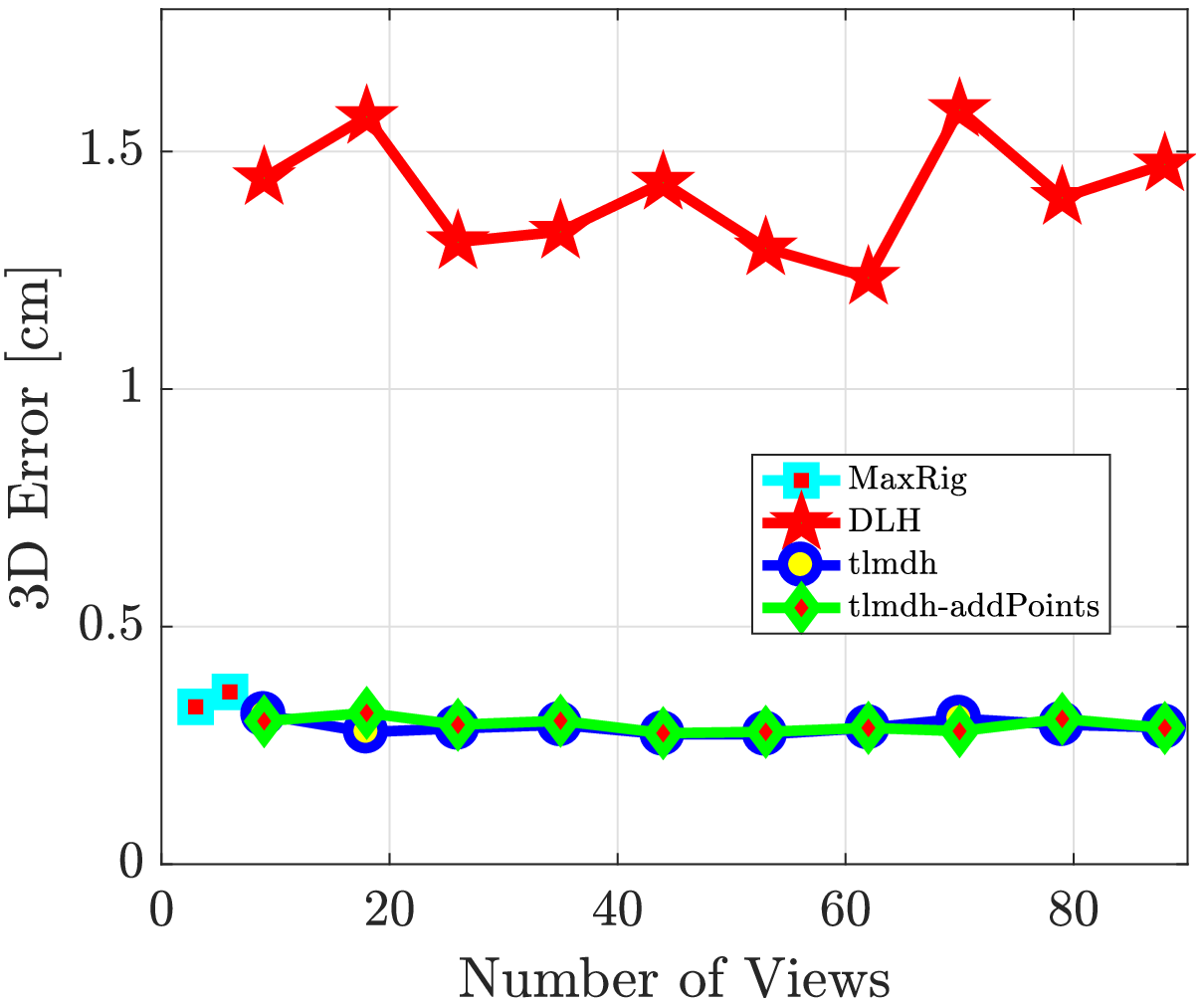}
   \includegraphics[width=0.24\textwidth]{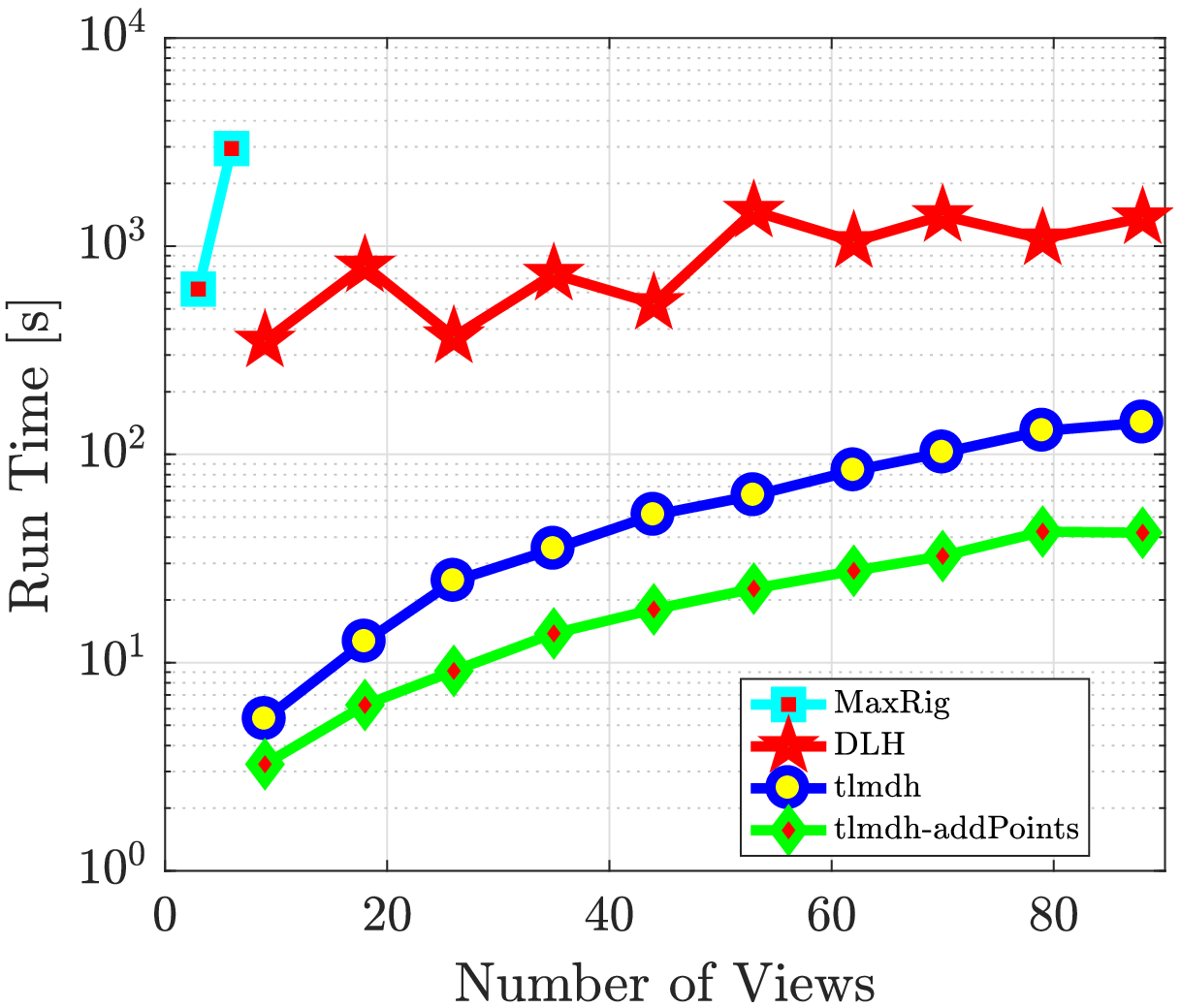}
  \end{minipage} 
(a) Adding points: 25\% of points are added incrementally to the initial reconstruction.
  \begin{minipage}{1\textwidth}
   \includegraphics[width=0.24\textwidth]{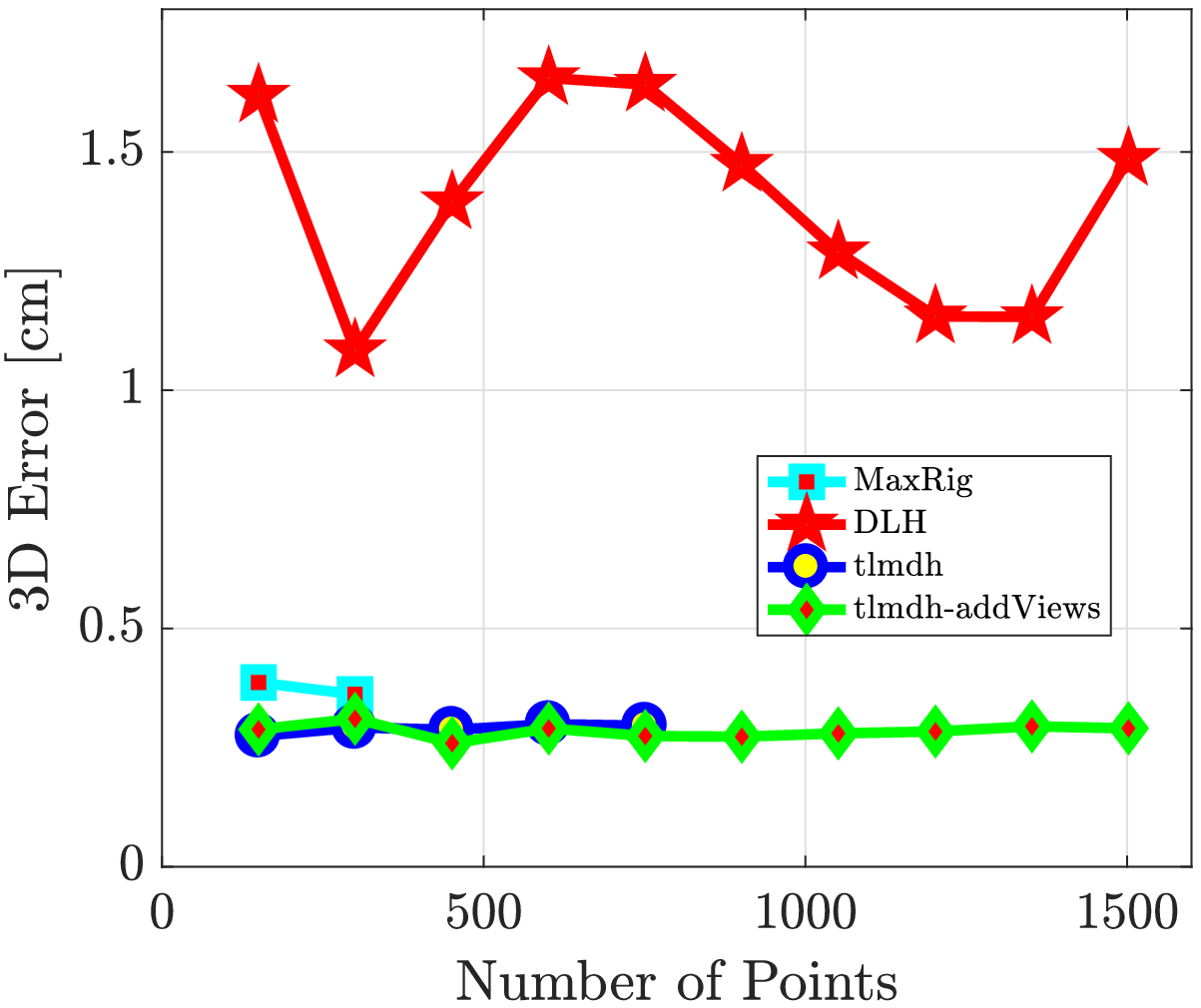}
   \includegraphics[width=0.24\textwidth]{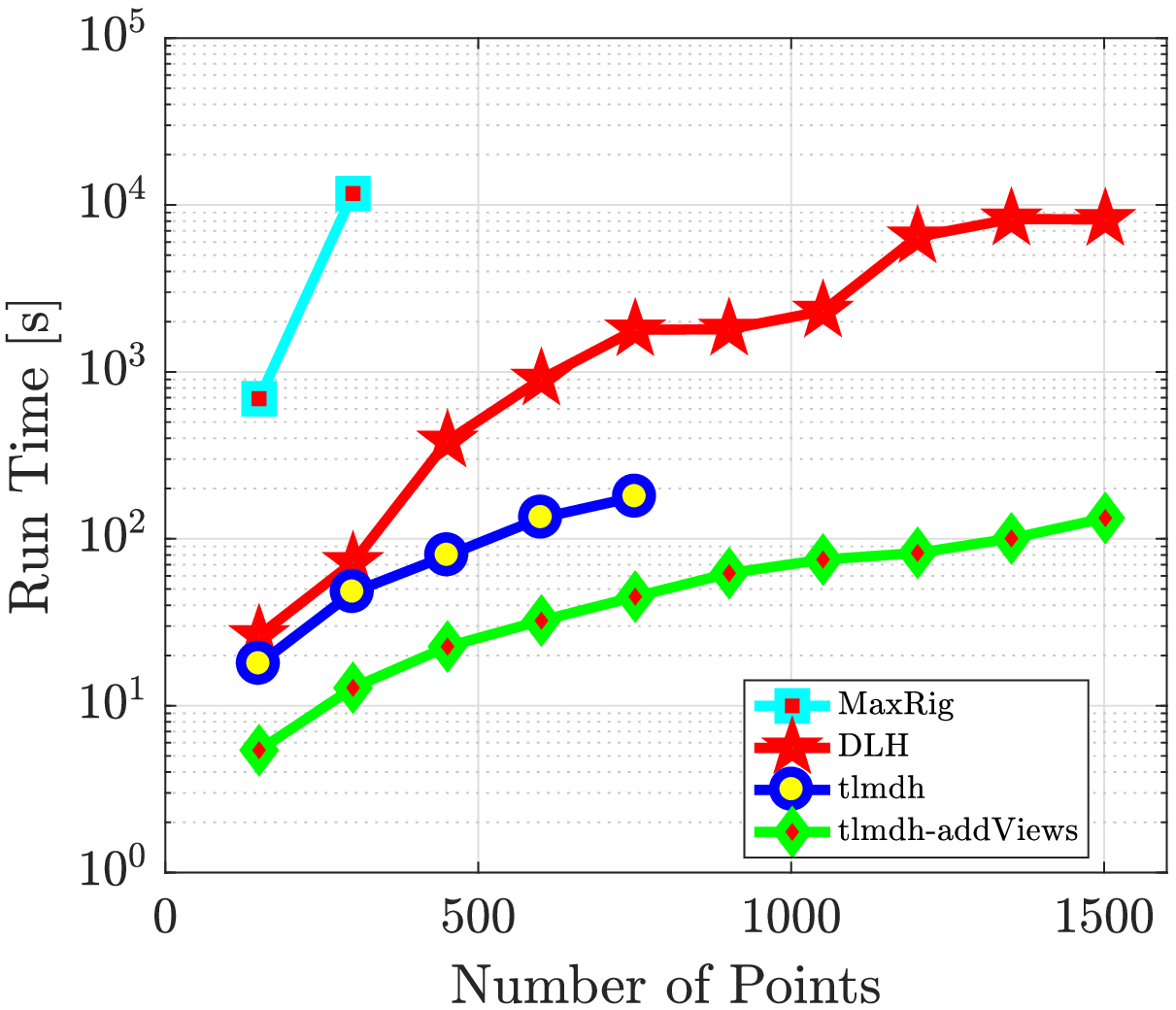}
   \includegraphics[width=0.24\textwidth]{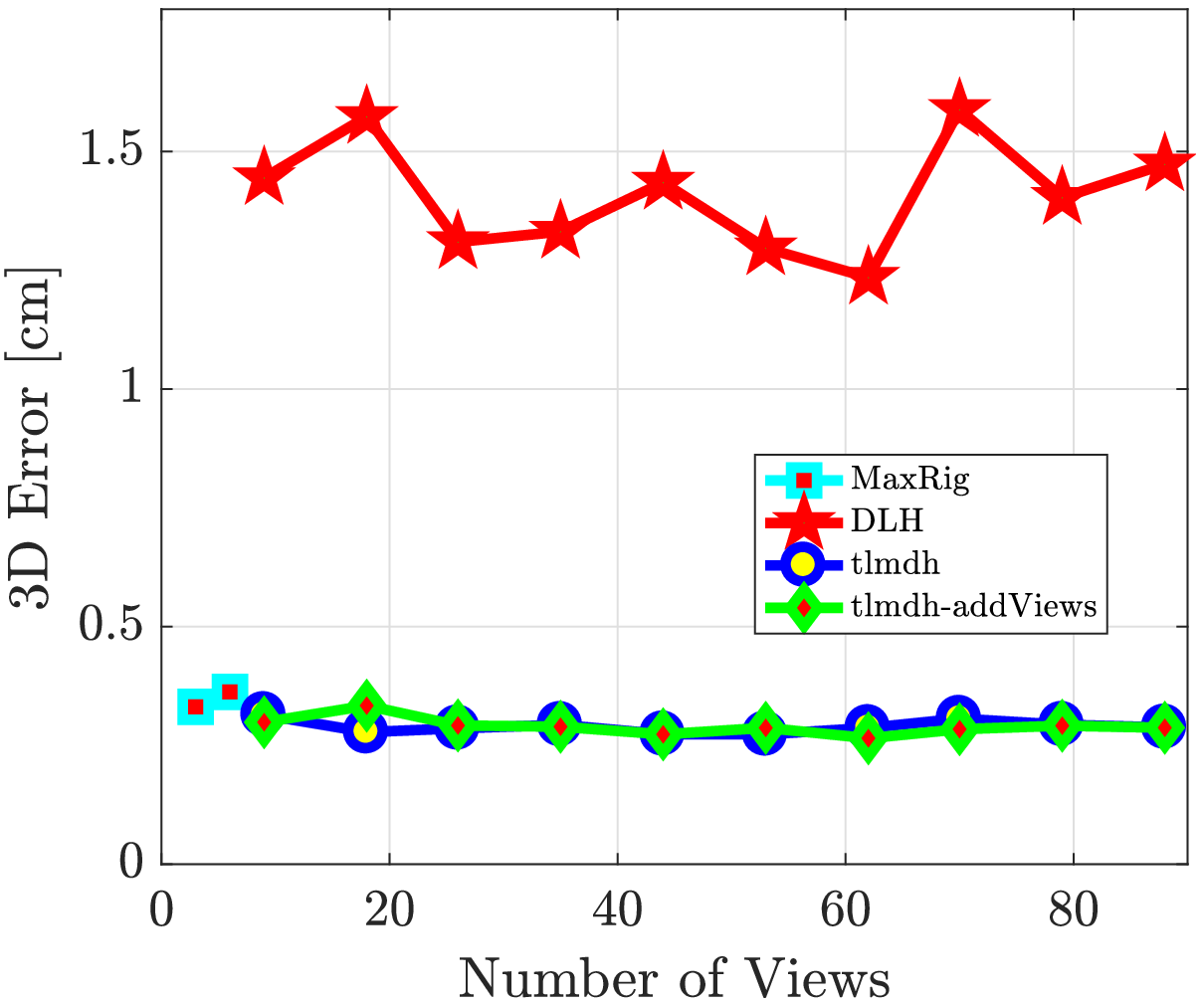}
   \includegraphics[width=0.24\textwidth]{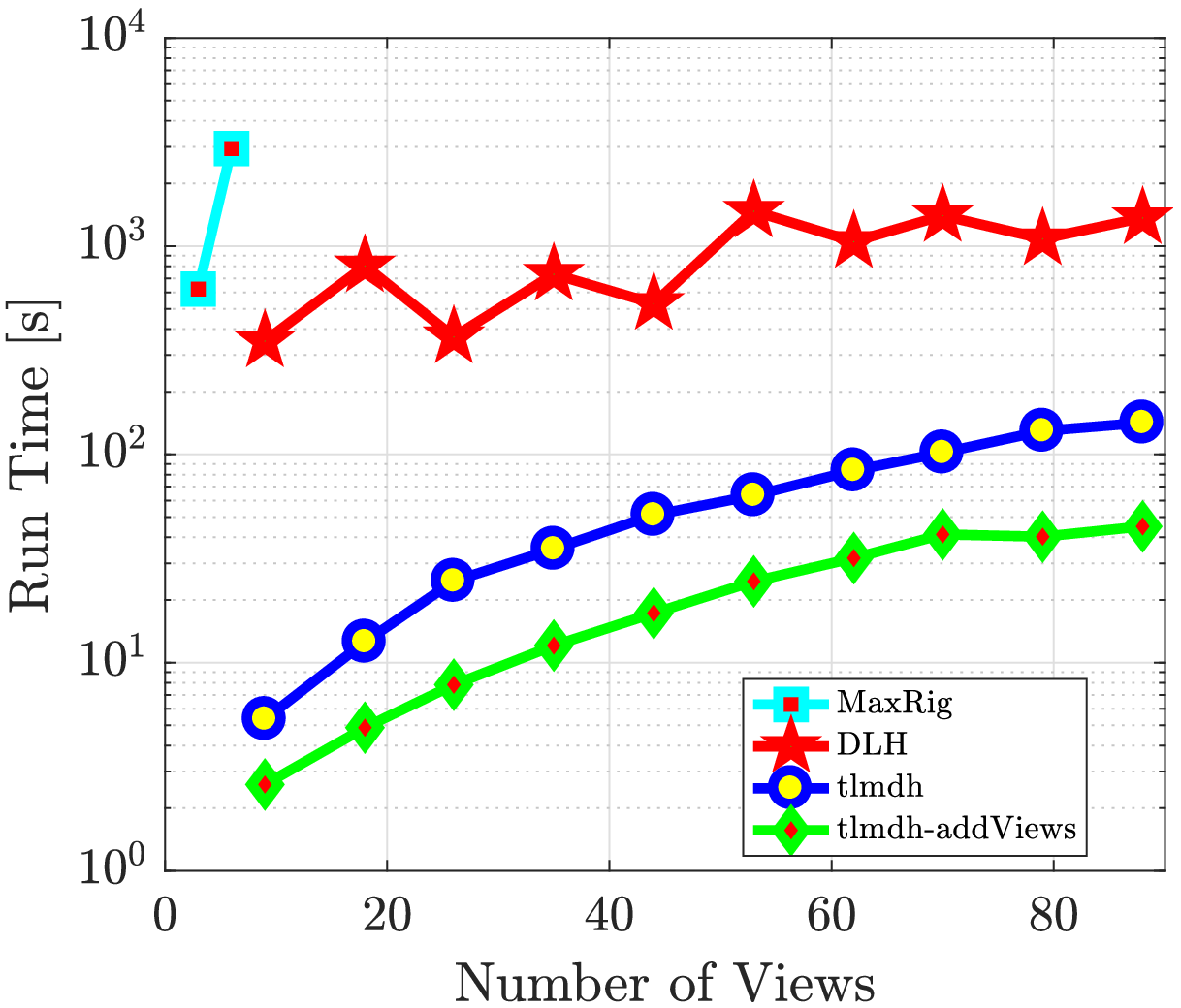}
  \end{minipage}
(b) Adding cameras: Half of the views are added to the initial reconstruction. 
\caption{\textbf{Incremental Semi-dense NRSfM.} Comparison of reconstruction error and run time on the \emph{Hand} dataset. Left: varying number of points (number of views=88). Right:  varying number of views (number of points=751). Run time shown in log scale.} 
\label{fig:incrementalRec}
\end{figure*}

\section{Experimental Results}
\label{sec:results}
We conduct extensive experiments in order to validate the presented theory and to evaluate the performance, run time and practicality of the proposed methods.  

\paragraph{Datasets.}
We first provide a brief descriptions of the datasets we use to analyze our algorithms.
\noindent{\bf KINECT Paper.} This VGA resolution image sequence shows a textured paper deforming smoothly~\cite{kinectpaper}. The tracks contain about 1500 semi-dense but noisy points.
\noindent{\bf Hulk \& T-Shirt.} The datasets contain a comic book cover in 21 different deformations, and a textured T-Shirt with 10 different deformations~\cite{ChhatkuliBMVC2014}, in high resolution images.
Although the number of points is low (122 and 85, resp.), the tracks have very little noise and therefore we obtain a very accurate auto-calibration.
\noindent{\bf Flag.} This semi-synthetic dataset is created from mocap recordings of deforming cloth~\cite{White2007}. We generate 250 points in 30 views using a virtual 640x480 perspective camera.
\noindent{\bf Newspaper.} This sequence\footnote{\label{fn:data}The dataset was provided by the authors.} contains the deformation and tearing of a double-page newspaper, recorded with KINECT in HD resolution~\cite{Chhatkuli2016}. 
\noindent{\bf Hand.} The Hand dataset~\cite{Chhatkuli2016} features medium resolution images. Dense tracking~\cite{Sundaram2010} of image points yield up to 1500 tracks in 88 views. The dataset consists of ground-truth 3D for the first and the last image of the sequence.
\noindent{\bf Minion \& Sunflower.} These sequences are recorded with a static Kinect sensor~\cite{Innmann2016}. Minion contains a stuffed animal undergoing folding and squeezing deformations. Sunflower however features only small translation w.r.t. the camera. We incrementally reconstruct more than 10,000 points for Minion, and 5,000 for Sunflower, as shown in Fig.~\ref{fig:denseQualitativeResultsFrontPage}. We are able to reconstruct the global deformation, and mid-level details such as the glasses of Minion. Unfortunately, due to the failure of optical flow tracking, we fail to reconstruct homogeneous areas and fine details. In Sunflower we can capture the deformation of the outside leafs, whereas finer details in the center of the blossom is not recovered due to insufficient change in viewpoint.
\noindent{\bf Camel\footnote{https://www.youtube.com/watch?v=PhpeadpZsa4} \& Kitten\footnote{https://www.youtube.com/watch?v=DIZM2OMNc7c}.} We took two sequences from YouTube videos to show the incremental semi-dense NRSfM from uncalibrated cameras. The camel turns around its head towards the moving camera, providing enough motion to faithfully reconstruct the 3D motion of the animal. Fig.~\ref{fig:denseQualitativeResultsFrontPage} shows the 3D structure of  more than 3,000 points for one out of 61 views reconstructed. In the Kitten sequence (18,000 points for each 36 views), a cat performs both articulated and deforming motion with body and tail. Again, state-of-the-art optical flow methods struggle to maintain stable points tracks, especially on the head. Nevertheless, our method captures the general motion to a very good extent. In all of the above datasets, \textbf{DLH} fails to get the correct shape while \textbf{MaxRig} cannot reconstruct the shape faithfully as it cannot handle enough points.
\noindent{\bf Cap.} This dataset contains wide-baseline views of a cap in two different deformations~\cite{Bartoli2013}. The 3D template of the undeformed cap was obtained using SfM pipeline for the images from the first camera. Then, the second camera is calibrated using our template-based method.

\subsection{Camera Calibration from a Non-rigid Scene}
To measure the quality of our calibration results, we report the 3D root mean square error (RMSE), the relative focal length and principal point estimation error. Furthermore, we provide the number of iterations and the corresponding run times in Table~\ref{tab:resultsFocalLengthEst}. 

\begin{table}
\centering
\setlength\tabcolsep{1.8pt}
\scriptsize
\begin{tabular}{l|c c|c c c|c c | c c|c c|c c}
\multirow{ 2}{*}{Dataset} & \multicolumn{2}{c|}{Number of} & \multicolumn{3}{c|}{Run time [s]} & \multicolumn{4}{c|}{Focal Estimation} & \multicolumn{4}{c}{Reconstruction Error $E_{\text{rec}}$}\\
& Points & Views &  $T_{\text{orig}}$ & $N_{\text{iter}}$ & $T_{\text{total}}$ & $f_{\text{init}}$& $f_{\text{GT}}$ & $f_{\text{est}}$ & Error \% & \multicolumn{2}{c|}{$f_{\text{GT}}$} & \multicolumn{2}{c}{$f_{\text{est}}$}\\
\hline
\multicolumn{14}{c}{Template-based focal length estimation}\\ \hline
KINECT Paper	& 301 	& 23 	& 2.3 	& - 	& 16.8 	& - 	&  528	& 590	& 11.74 & 3.00 		& 0.54\% & \textbf{2.83} & 0.50\%\\
Hulk 		& 122 	& 21 	& 0.4 	& - 	& 4.2 	& - 	&  3784	& 4300	& 13.61 & 5.73 		& 1.43\% & \textbf{5.53} & 1.37\%\\
Flag 		& 250	& 30 	& 1.3 	& - 	& 178.2 & - 	&  384 	& 420	& 9.38 	& 4.74 		& 0.58\% & \textbf{4.54} & 0.56\%\\
Cap		& 137	& 1 	& 0.3 	& - 	& 11.0 	& - 	&  2039	& 2300	& 12.8	& \textbf{1.13} & 4.80\% & \textbf{1.13} & 4.80\%\\ \hline
\multicolumn{14}{c}{Template-less focal length estimation} \\\hline
KPaper 		& 301 	& 23 	& 5.8 	& 3 	& 110.1	& 280 	& 528 	& 540	& 2.27 & 4.44 		& 0.80\%  & \textbf{4.28} & 0.77\%\\
Hulk 		& 122 	& 21 	& 1.9 	& 5 	& 36.5 	& 1641 	& 3784 	& 3800 	& 0.40 & 2.76 		& 0.67\%  & \textbf{2.75} & 0.66\%\\
T-Shirt 	& 85 	& 10 	& 0.6 	& 10 	& 24.1 	& 2000 	& 3787 	& 4000	& 5.63 & 3.52 		& 1.10\%   & \textbf{3.42} & 1.07\% \\
Flag		& 250 	& 30 	& 2.6 	& 6 	& 185.4	& 280 	& 384 	& 400 	& 4.17 & 5.24 		& 0.64\% & \textbf{5.05} & 0.62\%\\
Newspaper 	& 441 	& 19 	& 24.5 	& 5 	& 523.6 & 750 	& 1055 	& 870	& 16.6 & \textbf{7.79} 	& 1.09\% & 9.27 & 1.30\%\\%
\end{tabular}
\caption{\textbf{Focal Length Estimation from a Non-Rigid Scene.} We report the run-time, reconstruction error and relative focal length estimation error of our template-based and template-less NRSfM calibration methods. $T_{\text{orig}}$ is the time needed to reconstruct with a given focal length, $T_{\text{total}}$ the run time including calibration. For the template-less case, $N_{\text{iter}}$ iterations were performed until convergence.}
\label{tab:resultsFocalLengthEst}
\end{table}

\noindent{\bf Template-based Camera Calibration.}
In the first part of Algo.~\ref{alg:templateCalibration} we generate hypotheses for $\mathsf{K}$ and choose the one with best isometric match with the template. We perform experiments on the KINECT Paper, Hulk and Flag dataset and report the results in Table~\ref{tab:resultsFocalLengthEst}. We observe a consistent improvement in reconstruction accuracy with the estimated intrinsics.
The second part of Algo.~\ref{alg:templateCalibration} involves gradient-based refinement on the intrinsics by minimizing Eq.~\eqref{eq:templateCalibCost}. To analyze this part, we conduct two experiments: First, we perform refinement on the initially estimated intrinsics $f_{\text{poly}}$. Here we can consistently improve reconstruction errors with the refined intrinsics.
In the Hulk and Flag dataset, we also get a better estimate of the focal length. On KINECT Paper however, the focal length deteriorates, while reconstruction accuracy improves. This is most probably due to the noisy tracks in the sequence. Due to the effective regularization, the error in principal point is consistently low. In the second experiment, we gauge the robustness of our refinement method. To this end, we simulate initial intrinsics by adding $\pm 20\%$ uniform noise independently on each of the entries of $\mathsf{K}_{GT}$, and compare reconstruction error and the refined intrinsics shown in Table~\ref{table:calibrationRefinement}. We compare to Bartoli~et~al.~\cite{Bartoli2013} on the Cap dataset directly from the paper, since it is non-trivial to implement the method itself. We observed an error $E_{\text{f}}$ of about 13\% with our method, compared to 3.8\%-7.3\% reported by~\cite{Bartoli2013}. The slightly higher error in the Cap dataset can be partly attributed to the repeating texture that makes our image matches non-ideal. Overall we can observe a consistent improvement in almost all metrics, validating the robustness of the method and the assumptions it is based on.

\begin{table}
\centering
\newcommand{\cWidth}{0.6cm}
\newcommand{\hcWidth}{0.7cm}
\setlength\tabcolsep{0.7pt}
\scriptsize
 \begin{tabular}{l | C{\cWidth} || C{\cWidth}:C{\cWidth}:C{\cWidth} | C{\cWidth}:C{\cWidth}:C{\cWidth}:C{\cWidth} || C{\cWidth}C{\hcWidth}:C{\cWidth}C{\hcWidth}:C{\cWidth}C{\hcWidth}}
 \multirow{ 2}{*}{Dataset} & \multicolumn{1}{c||}{} & \multicolumn{3}{c|}{Template-based} & \multicolumn{4}{c||}{Refined} & \multicolumn{6}{c}{Simulated initial $\mathsf{K}$ ($10^3$ samples avg.)}\\  
& $f_{\text{GT}}$ & $f_{\text{poly}}$ & $E_{\text{f}}$   & $E_{\text{rec}}$ & $f_{\text{ref}}$ & $E_{\text{f}}$ & $E_{\text{PP}}$ & $E_{\text{rec}}$ &$E_{\text{f}}$ & $\Delta E_{\text{f}}$ & $E_{\text{PP}}$ & $\Delta E_{\text{PP}}$ & $E_{\text{rec}}$ & $\Delta E_{\text{rec}}$ \\ \hline
KINECT Paper 	& 528	& 590	& \textbf{11.74}& 2.83 	& 604	 & 14.45 	& 0.04 	&\textbf{2.73}	& 8.87 	& \textbf{-0.37}& 0.05	& \textbf{-10.96} & 3.82 & \textbf{-0.25} \\
Hulk  		& 3784	& 4300	& 13.61 & 5.53 		& 4119	 & \textbf{8.85}&  1.74 &\textbf{5.53} 	& 7.30 	& \textbf{-2.36}&  1.77 & \textbf{-8.84}  & 6.52 & \textbf{-0.01} \\
Flag 		& 384	& 420	& 9.38  & 4.54		& 414	 & \textbf{7.98}&  0.05 &\textbf{4.34} 	& 8.61 	& \textbf{-1.05}& 0.08 	& \textbf{-10.45} & 6.05 & +0.08\\
Cap 		& 2039	& 2300	& \textbf{12.8}	& \textbf{1.13}	& 2360	 & 13.1&  2.33	&\textbf{1.13} 	& 9.18 	& \textbf{-0.10}& 2.33 	& \textbf{-8.42} & 1.48 & \textbf{-0.00}\\
 \end{tabular}
 \caption{\textbf{Calibration Refinement.} We compute the full calibration $\mathsf{K}$ by initializing with the template-based calibration $f_{\text{poly}}$, and test the robustness by adding synthetic noise on the $\mathsf{K_{\text{GT}}}$. Reconstruction errors $E_{rec}$ are in mm, others in \%.}
 \label{table:calibrationRefinement}
%
%
\end{table}

\noindent{\bf Template-less Camera Calibration.}
To visualize the dynamics of Algo.~\ref{alg:calibration}, we plot the error in isometry $\Phi(\mathsf{K})$ over focal length for each iteration on the Hulk dataset in Fig.~\ref{fig:focalLengthEst} (a). Typically, less than 10 iterations are necessary for the method to converge. As we hypothesized above, Fig.~\ref{fig:focalLengthEst} (b) empirically verifies that we can find the termination criterion for our sweeping strategy by thresholding the focal length change $\delta(\mathsf{K}^*,\mathsf{\hat{K}})$. Our method consistently recovers a correct estimate of the intrinsics as reported in Table~\ref{tab:resultsFocalLengthEst}. Moreover, the fact that we obtain better reconstruction accuracy in almost all datasets validates our approach of using the isometric consistency $\Phi(\mathsf{K})$.

\begin{figure}
\centering
\scriptsize
  \begin{minipage}[c]{0.65\textwidth}
  \centering
   \includegraphics[width=0.49\textwidth]{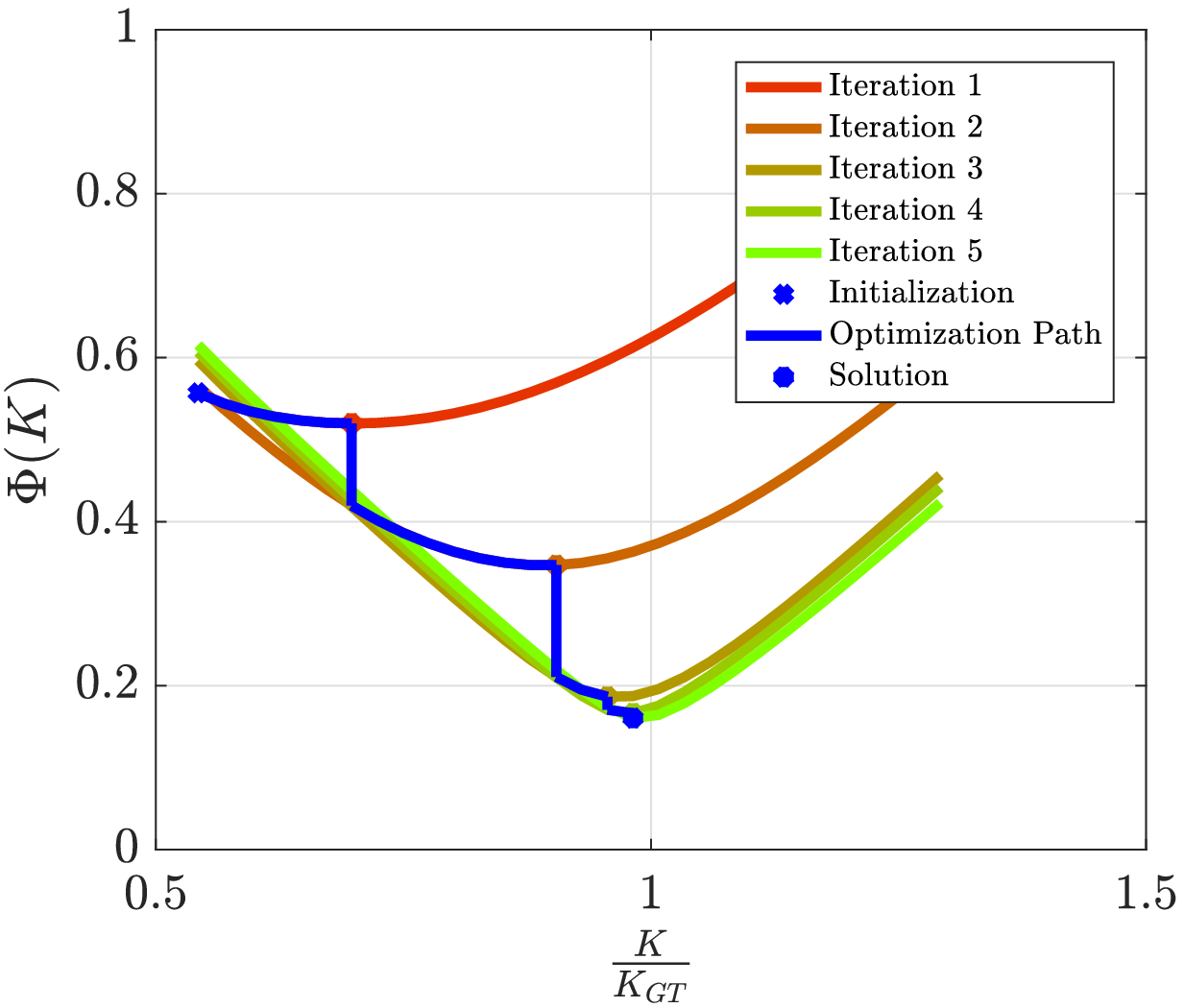}
   \includegraphics[width=0.49\textwidth]{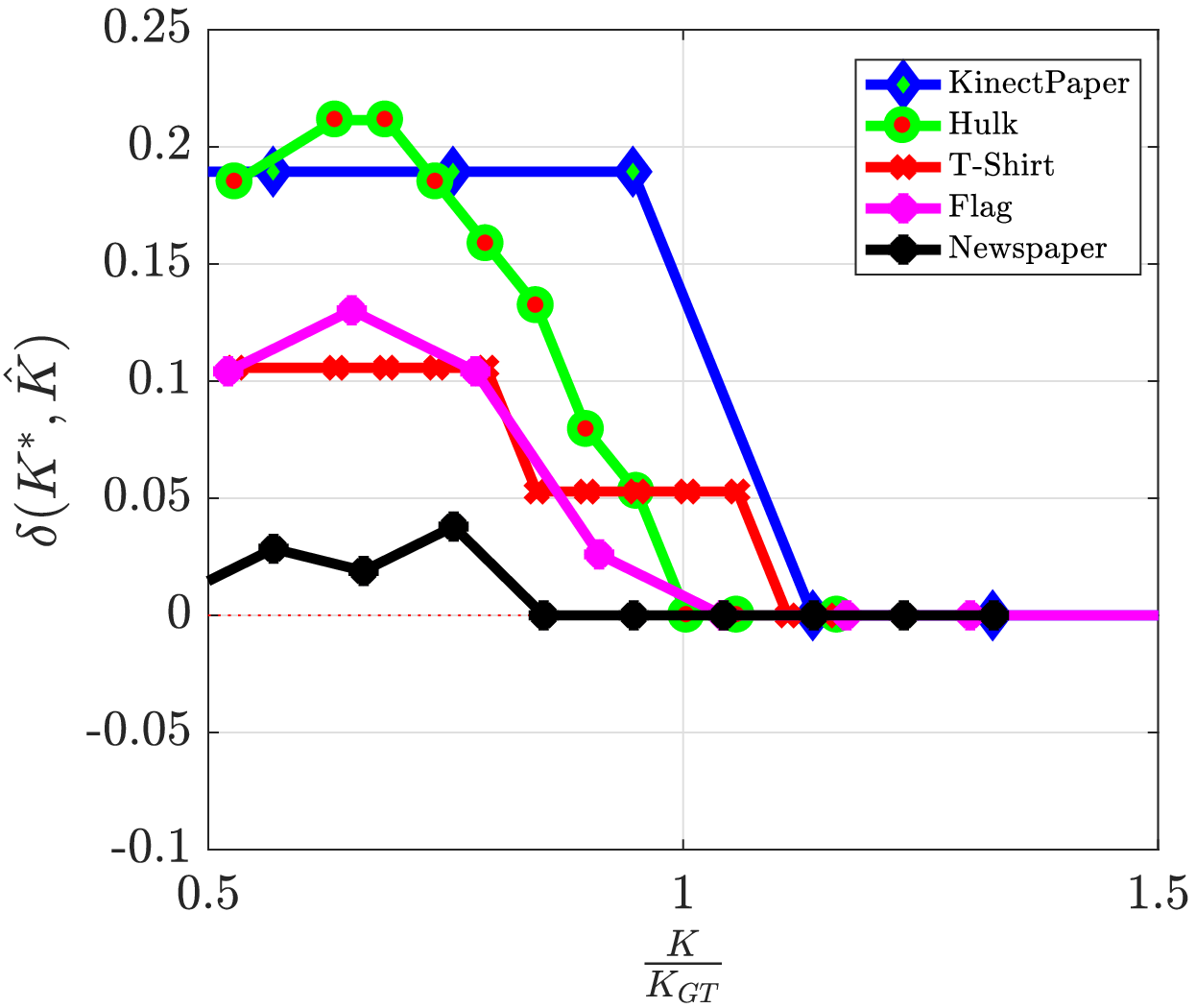}
   \end{minipage}\hfill
   \begin{minipage}[c]{0.34\textwidth}
   (a) Left: in each iteration of step 2, we look for a $\mathsf{K}^*$ that minimizes the error in isometry $\Phi(\mathsf{K})$.\\ \\ \\
   (b) Right: in step 3 we query the focal length gap $\delta(\mathsf{K}^*,\mathsf{\hat{K}})$, and terminate when it becomes sufficiently small.
  \end{minipage}
\caption{\textbf{Template-less Calibration (Algo.~\ref{alg:calibration}).} We iteratively search the smallest $\mathsf{K}$ that maximizes isometry.} 
\label{fig:focalLengthEst}
\end{figure}

\begin{table}
\centering
\scriptsize
\setlength\extrarowheight{2pt}
\setlength\tabcolsep{0.5pt}
\scalebox{0.9}{
\newcommand{\cWidth}{1cm}
\begin{tabular}{l C{\cWidth} C{\cWidth} : C{\cWidth} C{\cWidth} : C{\cWidth} : C{\cWidth} : C{\cWidth} : C{\cWidth} : C{1.3cm} }
\hline
\multicolumn{1}{l}{Datasets} & \multicolumn{2}{c}{\textbf{incr-tlmdh}} & \multicolumn{2}{c}{\textbf{tlmdh}} & \textbf{p-isomet} & \textbf{p-isolh} & \textbf{DLH} & \textbf{o-kfac} \\ \midrule
\multicolumn{1}{l}{KPaper} & \textbf{4.64} & \textbf{176.16s} & 5.41 & 605.06s & 7.63 & 13.64 & 14.66 & 13.93  \\
\multicolumn{1}{l}{Hulk} & 2.99 & \textbf{0.80s} & \textbf{2.76} & 1.99s & 10.76 & 14.54 & 22.98 & -   \\
\multicolumn{1}{l}{T-Shirt} &3.83 & \textbf{0.23s} & \textbf{3.53} & 0.47s & 10.60 &  8.94 & - & -  \\
\multicolumn{1}{l}{Cardboard} &\textbf{13.22} & \textbf{18.94s} & 14.56 & 34.35s & - & 12.95 & - & -  \\
\multicolumn{1}{l}{Rug} &\textbf{26.40} & \textbf{205.89s} & 26.60 & 542.39s & 26.15 & 38.26 & 31.01 & - \\
\multicolumn{1}{l}{Table mat} &15.99 & \textbf{5.54s}  & 14.36 & 7.65s & \textbf{14.21} & 20.71 & 17.51 & 16.24  \\
\multicolumn{1}{l}{Newspaper} & \textbf{10.79} & \textbf{89.27s} & 11.63 & 190.96s & 18.40 & 37.21 & 24.94 & 30.74  \\ \bottomrule
\end{tabular}
}
\caption{\textbf{Comparison of NRSfM methods.} Mean 3D errors in mm and run time comparison for batch and incremental reconstruction in real datasets.}
\label{tab:resultsCompareAll}
\end{table}

\subsection{Incremental Reconstruction}

We first present experiments on the dense Hand dataset in Fig.~\ref{fig:incrementalRec}. We compare to two state-of-the-art NRSfM approaches, \textbf{MaxRig}~\cite{Ji2017} and \textbf{DLH}~\cite{Dai2012}, as well as the to batch version of our approach \textbf{tlmdh}~\cite{Chhatkuli2016}. In the first row, we plot the performance of \textbf{tlmdh-addPoints}: we start by reconstructing a random subset of $\max \{150, \frac{N}{4×}\}$ points, and incrementally add the remaining points in subsequent iterations according to Eq.~\eqref{eq:addingPoints}. While achieving competitive reconstruction accuracy on par with \textbf{tlmdh}, we observe remarkable advantages in run time compared to all other methods. \textbf{MaxRig} shows good accuracy, but suffers from serious run time and memory problems. \textbf{DLH} on the other hand is slow and exhibits poor accuracy on this dataset, due to perspective and non-linear deformations. The second row of Fig.~\ref{fig:incrementalRec} shows the same experimental setup with \textbf{tlmdh-addViews}. Here, we reconstruct all points at once, but incrementally add the remaining 50\% of views to the reconstruction of the first half. To this end, we compute the template from the first reconstruction and employ SfT. The graphs clearly show that \textbf{tlmdh-addViews} exhibits a favorable run time complexity without impairing the reconstruction accuracy. We provide more results in the supplementary material.
Furthermore, we perform extensive experiments on a variety of additional datasets, and compare with the reconstructions of \textbf{p-isomet}~\cite{Parashar2016}, \textbf{p-isolh}~\cite{ChhatkuliBMVC2014}, \textbf{DLH}~\cite{Dai2012}, and \textbf{o-kfc}~\cite{Gotardo2011} in Table~\ref{tab:resultsCompareAll} obtained from \cite{Chhatkuli2017}. Overall, we observe a significant advantage in accuracy and run time in particular compared to the best performing baseline \textbf{tlmdh}.

\section{Conclusions}
\label{sec:conclusion}
In this paper we formulated a method addressing the unknown focal-length in NRSfM and unknown intrinsics in SfT. Despite the computational complexity of convex NRSfM, we formulated an incremental framework to obtain semi-dense reconstruction and reconstruct new views. We developed our theory based on the surface isometry prior in the context of the perspective camera. We developed and verified our approach for intrinsics/focal-length recovery for both template-based and template-less non-rigid reconstruction. Essential to our method is a novel upgrade equation, that analytically relates reconstructions for different intrinsics. We performed extensive quantitative and qualitative analysis of our methods on different datasets which shows the proposed methods perform well despite addressing very challenging problems.

\section*{Acknowledgements}
Research was funded by the EU's Horizon 2020 programme under  grant No.\ 645331-- EurEyeCase and grant No.\ 687757-- REPLICATE,  and the Swiss Commission for Technology and Innovation (CTI, Grant No.  26253.1 PFES-ES, EXASOLVED).

{\small
\bibliographystyle{ieee}
\bibliography{egbib}
}

\end{document}